\title{Tight Performance Guarantees of Imitator Policies with Continuous Actions}
\author{
    %Authors
    % All authors must be in the same font size and format.
    Davide Maran, Alberto Maria Metelli, Marcello Restelli
}
\title{My Publication Title --- Single Author}
\author {
    Author Name
}
\title{My Publication Title --- Multiple Authors}
\author {
    % Authors
    First Author Name,\textsuperscript{\rm 1,\rm 2}
    Second Author Name, \textsuperscript{\rm 2}
    Third Author Name \textsuperscript{\rm 1}
}
\newtheorem{defin}{Definition}
\newtheorem{thm}{Theorem}
\newtheorem{lem}{Lemma}
\newtheorem{cor}[thm]{Corollary}
\newtheorem{prop}[thm]{Proposition}
\newtheorem{exa}{Example}
\newtheorem{ass}{Assumption}
\newtheorem{remark}{Remark}
\newcommand{\R}{\mathbb{R}}
\newcommand{\N}{\mathbb{N}}
\newcommand{\E}{\mathop{\mathbb{E}}}
\newcommand{\wass}{\mathcal W}
\newcommand*{\bdiv}{%
  \nonscript\mskip-\medmuskip\mkern5mu%
  \mathbin{\operator@font div}\penalty900\mkern5mu%
  \nonscript\mskip-\medmuskip
}
\DeclareMathOperator*{\Proba}{\mathbb{P}}
\newcommand{\MDP}{\mathcal{M}}
\newcommand{\Ss}{\mathcal{S}}
\newcommand{\As}{\mathcal{A}}
\newcommand{\Xs}{\mathcal{X}}
\newcommand{\SAs}{\mathcal{S}\times\mathcal{A}}
\newcommand{\Ys}{\mathcal{Y}}
\newcommand{\PM}[1]{\mathcal{P}{(#1)}}
\newcommand{\tv}{\mathrm{TV}}
\newcommand{\de}{\mathrm{d}}
\newcommand{\diam}[1]{\mathrm{diam}{(#1)}}
\DeclareRobustCommand{\eg}{e.g.,\@\xspace}                         
\DeclareRobustCommand{\ie}{i.e.,\@\xspace}                         
\DeclareRobustCommand{\wrt}{w.r.t.\@\xspace}
\definecolor{citrine}{rgb}{0.89, 0.82, 0.04}
\definecolor{blued}{RGB}{70,197,221}
\definecolor{applegreen}{rgb}{0.55, 0.71, 0.0}
\definecolor{flame}{rgb}{0.89, 0.35, 0.13}
\renewcommand\thmcontinues[1]{\textbf{continued}}
\begin{document}

\maketitle

\begin{abstract}
Behavioral Cloning (BC) aims at learning a policy that mimics the behavior demonstrated by an expert. The current theoretical understanding of BC is limited to the case of finite actions. In this paper, we study BC with the goal of providing theoretical guarantees on the performance of the imitator policy in the case of continuous actions. We start by deriving a novel bound on the performance gap based on Wasserstein distance, applicable for continuous-action experts, holding under the assumption that the value function is Lipschitz continuous. Since this latter condition is hardy fulfilled in practice, even for Lipschitz Markov Decision Processes and policies, we propose a relaxed setting, proving that value function is always H\"older continuous. This result is of independent interest and allows obtaining in BC a general bound for the performance of the imitator policy. Finally, we analyze noise injection, a common practice in which the expert's action is executed in the environment after the application of a noise kernel. We show that this practice allows deriving stronger performance guarantees, at the price of a bias due to the noise addition.
\end{abstract}

\section{Introduction}\label{sec:intro}
The degree of interaction of the human in the ecosystem of artificial intelligence is progressively becoming more and more prominent~\cite{Zanzotto19}. In this setting, the human plays the role of an expert that, with different tools, interacts with the artificial agents and allows the agent to leverage their knowledge to improve, quicken, and make the learning process more effective~\cite{JeonMD20}.

Imitation Learning~\citep[IL,][]{OsaPNBA018} can be considered one of the simplest forms of interaction between a human and an artificial agent. This kind of interaction is unidirectional since the human expert provides the agent with a set of demonstrations of behavior that is optimal w.r.t. an unknown objective. The agent, on its part, aims to learn a behavior as close as possible to the demonstrated one. Classically, we distinguish between two realizations of IL: Behavioral Cloning~\citep[BC,][]{BainS95} and Inverse Reinforcement Learning~\citep[IRL,][]{AroraD21}. BC aims at mimicking the \emph{behavior} of the agent by recovering a policy that matches as much as possible the expert's demonstrated behavior. Instead, IRL has the more ambitious goal of reconstructing a \emph{reward} function that justifies the expert's behavior. Thus, it aims at representing the expert's \emph{intent} rather than their behavior. In this sense, IRL is more challenging than BC, as its output, the reward function, is a more powerful tool that succeeds in being deployed even in the presence of a modification of the environment.

Although IL techniques have been successfully applied to a large variety of real-world applications~\citep[\eg][]{AsfourAGD08,geng2011transferring,RozoJT13,LikmetaMRTGR21}, their theoretical understanding in terms of performance of the imitation policy is currently limited. Recently, in~\cite{il:bounds}, a first analysis of the error bounds has been provided for BC and Generative Adversarial Imitation Learning~\cite{HoE16}. However, these results involve the presence of an $f$-divergence~\cite{renyi1961measures}, usually total variation (TV) or KL-divergence, between the expert's policy and the imitator one. Consequently, they are significant only when the action space is finite, while becoming vacuous for experts with continuous actions. To further argue on the limitation of this analysis, consider the case in which BC is reduced to minimize the \emph{mean squared error} (MSE) between the expert's action and the imitator one. Even in this simple scenario, as we shall see, the current analysis based on TV cannot relate MSE with the performance of the imitator policy. This represents a relevant limitation since many of the applications of IL are naturally defined with continuous-actions context.

\paragraph{Original Contributions} In this paper, we aim to take a step forward to a more comprehensive theoretical understanding of BC. Specifically, we devise error bounds that relate the performance difference $J^{\pi_E}-J^{\pi_I}$ between the expert's policy $\pi_E$ and the imitator one $\pi_I$ to their divergence. Our bounds are based on the Wasserstein distance~\cite{villani2009optimal} and, for this reason, are meaningful even in the presence of continuous-action spaces (Section~\ref{sec:firstBound}). Our work contains the following contributions:
\begin{enumerate}
    \item We will prove a performance bound for standard BC in case of Lipschitz reward-transition for the MDP (see ~\cite{lyp:locality}) and Lipschitz continuity of the value function.
    \item Since the latter assumption is often violated in practice\footnote{It is well-known that the value function is Lipschitz continuous under the demanding assumption that $\gamma L_p(1+L_{\pi}) < 1$~\cite{lyp:locality}, requiring the Lipschitz constants of the transition model $L_p$ and of the policy $L_{\pi}$ to be very small.}, we extend the result by only requiring Lipschitzness of the MDP, even if it requires a weaker performance bound. It is also shown that, the less regularity we lose on the value function, the slower the convergence of BC (Section~\ref{sec:holder}). 
    \item Finally, we focus on a popular practice employed in imitation learning, \ie \emph{noise injection}~\cite{LaskeyLFDG17}. In this setting, the expert's action, before being executed in the environment, is corrupted with noise to make the imitation process more robust. We show that noise injection allows achieving stronger theoretical guarantees at the price of competing against a noisy expert, which could have a lower performance (Section~\ref{sec:noiseInj}).
\end{enumerate}
In particular, in the second point, we show that the value function of a Lipschitz MDP is always H\"older continuous, with a suitable choice of the exponent depending on the properties of the MDP and policy. This represents a result of independent interest that overcomes a well-known limitation of the Lipschitz continuity of the value function~\cite{lyp:locality,PirottaRB15}, with possible applications outside BC.

\section{Preliminaries}
In this section, we provide the necessary mathematical background (Section~\ref{sec:prelimMath}) and the foundations of Markov Decision Processes (Section~\ref{sec:prelimMDP}).

\subsection{Mathematical Background}\label{sec:prelimMath}

\paragraph{Notation} Let $\Xs$ be a set and $\mathfrak{F}$ be a $\sigma$-algebra over $\Xs$, we denote with $\PM{\Xs}$ the set of probability measures over the measurable space $(\Xs,\mathfrak{F} )$. Let $x \in \Xs$, we denote the Dirac delta measure centered in $x$ as $\delta_x$. Let $f : \Xs \rightarrow \mathbb{R}$ be a function, we denote the $L_\infty$-norm as $\|f\|_{\infty} = \sup_{x \in \Xs} f(x)$ and with $\|f\|_i$ the $L_{i}$-norm for $i \in \{1,2\}$.

\paragraph{Lipschitz Continuity}
Let $(\Xs,d_{\Xs})$ and $(\Ys,d_{\Ys})$ be two metric spaces and $L > 0$. A function $f:\Xs \rightarrow \Ys$ is said to be $L$-\emph{Lipschitz continuous} ($L$-LC) if:
\begin{align*}
    d_{\Ys}(f(x),f(x'))\leq L d_{\Xs}(x,x'), \quad \forall x,x' \in \Xs.
\end{align*}
We denote the Lipschitz semi-norm of function $f$ as $\left\Vert f\right\Vert_L = \sup_{x,x'\in \Xs, x\neq x'} {d_{\Ys}(f(x),f(x'))}/{d_{\Xs}(x,x')}$. In the real space ($X\subseteq \mathbb{R}^n$), we use the Euclidean distance, i.e., $d_{\Xs}(x,x')=\| x-x'\|_2$. For probability measures ($\Xs = \PM{\Omega}$), the most intuitive distance is the \emph{total variation} (TV), defined as:
\begin{align*}
    \tv(\mu, \nu) = \sup_{\|f\|_\infty \le 1} \left| \int_{\Omega} f(\omega) \left( \mu - \nu \right) (\de \omega) \right| \; \forall \mu,\nu \in \PM{\Omega}
\end{align*}
%\am{Ho messo questa definizione perche' mi sembra più coerente con la Wasserstein e $\tv(\mu, \nu)=\sup_{A\subset \Omega} |\mu(A)-\nu(A)|$ richiederebbe di dire che $A$ e' misurabile}
%$$\tv(\mu, \nu)=\sup_{A\subset \Omega} |\mu(A)-\nu(A)|$$
However, with continuous deterministic distributions, the TV takes its maximum value $1$ (Figure~\ref{wass_dist}). Thus, we introduce $L_1$-\emph{Wasserstein} distance \citep{villani2009optimal}, defined as:
\begin{align*}
    \wass(\mu , \nu)=\sup_{\left\Vert f\right\Vert_L\leq 1} \left\vert \int_{\Omega} f(\omega)(\mu-\nu) (\de \omega) \right\vert \quad \forall \mu,\nu \in \PM{\Omega} 
\end{align*}
It is worth noting that, for deterministic distributions, we have $\wass(\delta_x , \delta_{x'}) = d_{\Xs}(x,x')$.

\paragraph{H\"older Continuity}
The notion of Lipschitz continuity is generalized by H\"older continuity. Let $(\Xs,d_{\Xs})$ and $(\Ys,d_{\Ys})$ be two metric spaces and $L,\alpha > 0$. A function $f:\Xs \rightarrow \Ys$ is said to be $(\alpha,L)$-\emph{H\"older continuous} ($(\alpha,L)$-HC) if:
\begin{align*}
    d_{\Ys}(f(x),f(x'))\leq L d_{\Xs}(x,x')^\alpha, \quad \forall x,x' \in \Xs.
\end{align*}
%Despite being more general, this notion is not very popular in the field of RL, and when used, it is often called Lipschitzness as well \cite{munos2008finite}. 
It is worth noting that: (i) Lipschitz continuity is obtained by H\"older continuity for $\alpha=1$; (ii) only constant functions are $(\alpha,L)-$HC for $\alpha>1$; (iii) in bounded domains, the higher the value of $\alpha$ the more restrictive the condition.

\paragraph{Convolution} Let $f,g: \mathbb R^n\to \mathbb R$ be two functions, their \emph{convolution} is defined for all $x \in \mathbb R^n$ as:
$$(f*g)(x) \coloneqq \int_{\mathbb R^n} f(x-y)g(y) \de y = \int_{\mathbb R^n} f(y)g(x-y) \de y.$$
%If one of the two, say $g$ is a distribution, the convolution can be defined as
%$$(f*g)(x) \coloneqq \int_{\mathbb R^n} f(x-y) \de g(y).$$
%his operator is defined also if both $f$ and $g$ are distributions, even if the current formulation with the integral is not rigorous, and a better formulation can be found in \cite{rudin2017fourier}. 
%After introducing this mathematical instruments, we can prove a result that will come useful later.
%One of the most remarkable properties of the convolution is its ability to make irregular functions smoother. This is captured by the following result.
We introduce the following regularity assumption regarding the probability measures.
\begin{defin}\label{tv_lips}
    A probability measure $\mathcal{L} \in \PM{\mathbb{R}^n}$ is $L$-TV-Lipschitz continuous ($L$-TV-LC) if:
    $$\tv(\mathcal{L}(\cdot + h),\mathcal{L}(\cdot))\le L\|h\|_2, \qquad \forall h\in \mathbb R^n.$$
\end{defin}
Under this assumption, we can prove that the convolution regularizes bounded and possibly irregular functions.
\begin{restatable}[]{prop}{smooth}\label{smooth} 
    Let $f: \mathbb{R}^n \rightarrow \mathbb{R}$ be a function such that $\|f\|_\infty \le M $, and let $\mathcal{L} \in \PM{\mathbb{R}^n}$ be an $L$-TV-LC probability measure that admits density function $\ell: \mathbb{R}^n \rightarrow \mathbb{R}_{\ge 0}$. Then, the convolution $f*\ell$ is $2LM$-LC continuous.
\end{restatable}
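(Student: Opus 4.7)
The plan is to re-express the difference $(f\ast\ell)(x) - (f\ast\ell)(x')$ as the integral of a uniformly bounded test function against the signed measure $\mathcal{L}(\cdot + h) - \mathcal{L}$, with $h \coloneqq x - x'$, and then invoke the TV-Lipschitz hypothesis via its dual formulation. Writing the convolution in the symmetric form $(f\ast\ell)(x) = \int f(y)\ell(x-y)\,\de y$ places the dependence on $x$ entirely inside $\ell$, so that
\[
(f\ast\ell)(x) - (f\ast\ell)(x') = \int f(y)\bigl[\ell(x-y) - \ell(x'-y)\bigr]\,\de y.
\]

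The next step is a reflection change of variable $v = x' - y$, which transfers the $h$-shift onto $\ell$ while keeping the test function bounded. This yields
\[
(f\ast\ell)(x) - (f\ast\ell)(x') = \int g(v)\bigl[\ell(v+h) - \ell(v)\bigr]\,\de v = \int g(v)\, \de(\mathcal{L}(\cdot+h) - \mathcal{L})(v),
\]
where $g(v) \coloneqq f(x'-v)$. The key observation is that $\|g\|_\infty = \|f\|_\infty \le M$, so $g/M$ is an admissible test function in the dual definition of $\tv$.

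From here, the argument is immediate: the dual characterization of total variation gives $\bigl|\int g\,\de(\mathcal{L}(\cdot+h) - \mathcal{L})\bigr| \le M\cdot \tv(\mathcal{L}(\cdot+h), \mathcal{L})$ (up to the usual factor of $2$ absorbed into the TV normalization), and the $L$-TV-LC hypothesis bounds the latter by $L\|h\|_2$. Combining these gives $|(f\ast\ell)(x) - (f\ast\ell)(x')| \le 2ML\,\|x-x'\|_2$, which is precisely the claimed $2LM$-Lipschitz bound.

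There is no real analytic obstacle: the proof is essentially a change of variable followed by Kantorovich--Rubinstein-style duality that is already built into the definition of TV. The only subtlety to watch is the normalization convention used for $\tv$ when converting between the supremum-over-sets form and the integral-of-density-difference form, which is what produces the constant $2$ rather than $1$ in the final bound.
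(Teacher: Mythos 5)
Your proposal is correct and follows essentially the same route as the paper's proof: express the difference $(f*\ell)(x)-(f*\ell)(x')$ as an integral of $f$ against the difference of shifted densities, bound it by $M\int|\ell(x-z)-\ell(x'-z)|\,\de z = 2M\,\tv(\mathcal{L}(\cdot+h),\mathcal{L})$, and apply the $L$-TV-LC hypothesis. Your explicit change of variable making the shift $\mathcal{L}(\cdot+h)-\mathcal{L}$ appear, and your remark on the TV normalization producing the factor $2$, are only minor presentational refinements of the same argument.
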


\subsection{Markov Decision Processes}\label{sec:prelimMDP}

%\paragraph{Markov Decision Processes}
A discrete-time discounted Markov Decision Process~\citep[MDP,][]{puterman2014markov} is a 6-tuple $\mathcal{M} = (\Ss,\As, p, r, \gamma, \mu)$ where $\Ss $ and $\As$ are the measurable sets of states and actions, $p: \SAs \rightarrow\PM{\Ss}$ is the transition model that defines the probability measure $p(\cdot \vert s,a)$ of the next state when playing action $a\in\As$ in state $s\in\Ss$, $r: \SAs \rightarrow \mathbb{R}$ is the reward function defining the reward $r(s,a)$ upon playing action $a\in\As$ in state $s\in\Ss$, $\gamma \in [0,1)$ is the discount factor, and $\mu \in \PM{\Ss}$ is the initial-state distribution.
The agent's behavior is modeled by a (Markovian stationary) policy $\pi: \Ss \rightarrow \PM{\As}$, which assigns a probability measure $\pi(\cdot|s)$ of the action to be taken in state $s\in\Ss$. With little abuse of notation, when the policy is deterministic, we denote with $\pi(s)$ the action played in state $s\in \Ss$. The execution of a policy determines a \emph{$\gamma$-discounted visitation distribution}, defined as: 
\begin{align*}
    d^\pi(s) \coloneqq (1-\gamma)\sum_{t=0}^{+\infty} \gamma^t\Proba(s_t=s\vert \pi, \mu), \quad \forall s \in \Ss.
\end{align*}

\paragraph{Value Functions}
The state-action value function (or \emph{Q-function}) which quantifies the expected discounted sum of the rewards obtained under a policy $\pi$, starting from a state $s\in\Ss$ and fixing the first action $a\in\As$:
\begin{align}\label{eq:action_state_val}
    Q^{\pi}(s,a) \coloneqq \mathbb{E}_{\pi} \left[ \sum_{t=0}^{+\infty} \gamma^t r(s_t,a_t)\bigg | s_0=s,a_0=a  \right],
\end{align}
where $\E_{\pi}$ denotes the expectation \wrt to the stochastic process $a_t \sim \pi(\cdot|s_t)$ and $s_{t+1} \sim p(\cdot|s_t,a_t)$ for all $t \in \mathbb{N}$. The state value function (or \emph{V-function}) is defined as $V^\pi(s)\coloneqq\E_{a\sim\pi(\cdot\vert s)}[Q^\pi(s,a)]$, for all $s\in\Ss$. Given an initial state distribution $\mu$, the \emph{expected return} is defined as:
$$J^{\pi} \coloneqq \E_{s \sim \mu}[V^{\pi}(s)] =  \frac{1}{1-\gamma} \E_{s \sim d^{\pi}, a \sim \pi(\cdot|s)}[r(s,a)].$$

\paragraph{Lipschitz MDPs}
We now introduce notions that will allow us to characterize the smoothness of an MDP~\cite{lyp:locality}. To this end, we assume that the state space $\Ss$ and action space $\As$ are metric spaces endowed with the corresponding distance functions $d_{\Ss}$ and $d_{\As}$.

\begin{ass}[Lipschitz MDP]\label{ass:LipMDP}
An MDP $\mathcal{M}$ is $(L_p,L_r)$-LC if, for all $(s,a),(s',a')\in\Ss\times\As$ it holds that:
\begin{align*}
    & \wass(p(\cdot\vert s,a), p(\cdot\vert s',a'))\leq L_p \left( d_{\Ss}(s,s')  + d_{\As}(a,a')\right),
    \\
    & \left\vert r(s,a)-r(s',a') \right\vert \leq L_r \left( d_{\Ss}(s,s') + d_{\As}(a,a')\right).
\end{align*}
\end{ass}
\begin{ass}[Lipschitz Policy]\label{ass:LipPol}
A (Markovian stationary) policy $\pi$ is $L_\pi$-LC if, for all $ s,s'\in\Ss$  it holds that:
\begin{align*}
    \wass(\pi(\cdot\vert s), \pi(\cdot\vert s'))\leq L_\pi d_{\Ss}(s,s').
\end{align*}
\end{ass}

\begin{figure}[t]
\centering
\includegraphics[width=\linewidth]{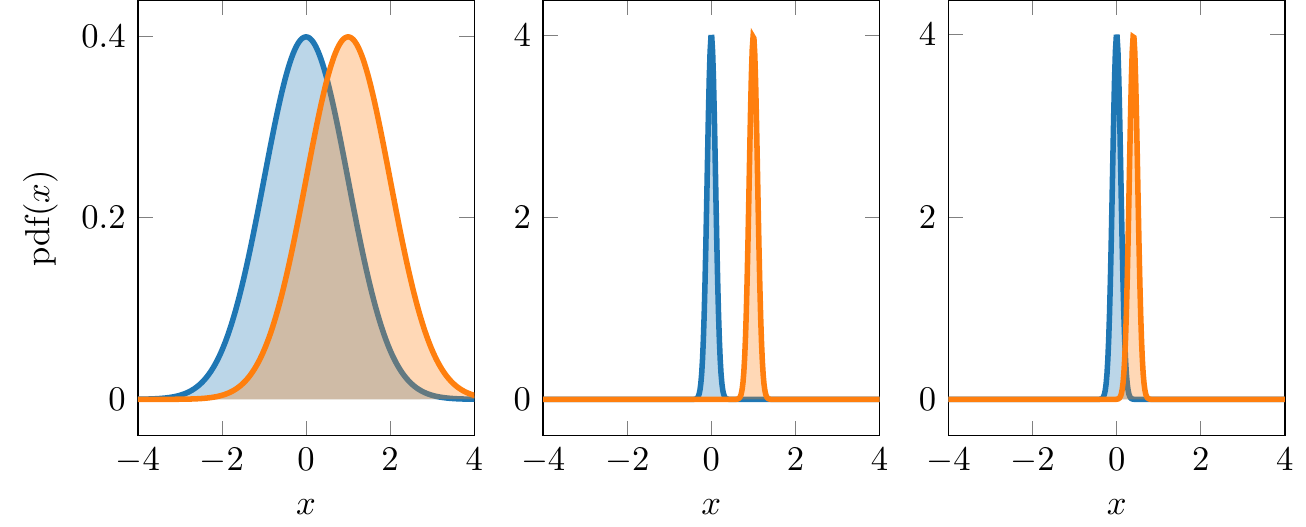} 
\caption{Comparison between TV and Wasserstein distances for two Gaussian distributions $\mu$ and $\nu$. Left: $\tv(\mu,\nu)\approx 0.38$, $\wass(\mu,\nu)=1$, Center: $\tv(\mu,\nu)\approx 1$, $\wass(\mu,\nu)=1$, Right: $\tv(\mu,\nu)\approx 1$, $\wass(\mu,\nu)=0.4$}
\label{wass_dist}
\end{figure}

Note, if instead of the Wasserstein metric, we had used the TV, these assumptions would be way more restrictive, not holding for deterministic environment/policies with continuous state-action spaces~\cite{munos2008finite}.
Under Assumptions~\ref{ass:LipMDP} and \ref{ass:LipPol},
provided that $\gamma L_p(1+L_\pi)< 1$, the Q-function $Q^\pi$ is $L_Q$-LC with $L_Q \le \frac{L_r}{1-\gamma L_p(1+L_\pi)}$ \citep[Theorem 1]{lyp:locality}.

\section{Bound for Imitating Policies based on Wasserstein Distance}\label{sec:firstBound}

The high-level goal of this work is to find a theoretical guarantee for the imitator policies learned with BC. Specifically, we want to bound the difference in expected return $J^{\pi_E} - J^{\pi_I}$ between the \emph{imitator} policy $\pi_I$ learned with BC and the \emph{expert} policy $\pi_E$ in terms of a distributional divergence between the corresponding action distributions.

The best-known results for this kind of analysis, in the case of \emph{discrete} action spaces, are proved in \cite{il:bounds} and we report it below for completeness.\footnote{The result reported in \cite{il:bounds} involves the KL-divergence and is obtained, via Pinsker's inequality, from the one we report that is tighter (Appendix A.2 of~\citet{il:bounds}).}

\begin{thm}[\citet{il:bounds}, Theorem 1]\label{thm:old}
Let $\pi_E$ be the expert policy and $\pi_I$ be the imitator policy. If $|r(s,a)| \le R_{\max}$ for all $(s,a) \in \SAs$, it holds that:
$$J^{\pi_E}- J^{\pi_I}\le \frac{2 R_{\max}}{(1-\gamma)^2}\E_{s\sim d^{\pi_E}} [\tv(\pi_E(\cdot|s), \pi_I(\cdot|s))].$$
\end{thm}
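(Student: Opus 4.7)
The plan is to apply the Performance Difference Lemma (PDL) of Kakade and Langford and then bound the resulting pairing against $(\pi_E - \pi_I)(\cdot|s)$ by the total variation distance, using the uniform boundedness of $Q^{\pi_I}$. Concretely, I would first establish the identity
$$J^{\pi_E}-J^{\pi_I}=\frac{1}{1-\gamma}\,\E_{s\sim d^{\pi_E}}\Big[\E_{a\sim \pi_E(\cdot|s)}[Q^{\pi_I}(s,a)] - V^{\pi_I}(s)\Big],$$
which is a standard telescoping computation: start from $V^{\pi_E}(s)-V^{\pi_I}(s)$, insert and subtract the Bellman operator of $\pi_I$ applied to $V^{\pi_I}$ at each time step, iterate, and re-sum the geometric series to produce the $\gamma$-discounted visitation $d^{\pi_E}/(1-\gamma)$. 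Fubini keeps this valid in the general measurable state-action setting.

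Since $V^{\pi_I}(s)=\E_{a\sim \pi_I(\cdot|s)}[Q^{\pi_I}(s,a)]$, the bracketed quantity equals $\int_{\As} Q^{\pi_I}(s,a)\,(\pi_E-\pi_I)(\de a|s)$, \ie the pairing of the bounded test function $Q^{\pi_I}(s,\cdot)$ with the signed measure $\pi_E(\cdot|s)-\pi_I(\cdot|s)$. The hypothesis $|r|\le R_{\max}$ and the series defining $Q^{\pi_I}$ yield $\|Q^{\pi_I}\|_\infty \le R_{\max}/(1-\gamma)$, and the dual characterisation of total variation from Section~\ref{sec:prelimMath} then gives
$$\left|\int_{\As} Q^{\pi_I}(s,a)\,(\pi_E-\pi_I)(\de a|s)\right| \le \frac{2R_{\max}}{1-\gamma}\,\tv\big(\pi_E(\cdot|s),\pi_I(\cdot|s)\big),$$
where the factor $2$ reflects the customary probabilistic normalisation of $\tv$ (equivalently, one centres $Q^{\pi_I}$ by the constant $V^{\pi_I}(s)$, so that the test function becomes the advantage $A^{\pi_I}$, which has oscillation at most $2R_{\max}/(1-\gamma)$). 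Taking the expectation over $s\sim d^{\pi_E}$ and combining with the PDL identity closes the argument.

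The only step that requires real care is the measure-theoretic PDL on continuous state-action spaces, but the argument is entirely standard; once it is in place, the rest is one line of duality plus a geometric-sum bound on $Q^{\pi_I}$. The main qualitative limitation -- and the reason the rest of the paper is needed -- is that the resulting $\tv$ saturates at its maximum value as soon as $\pi_E$ and $\pi_I$ are deterministic and differ on a set of positive $d^{\pi_E}$-measure, which is exactly the continuous-action regime this work is meant to address.
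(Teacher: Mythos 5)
Your proposal is correct, and it is essentially the argument the paper itself uses: the paper does not reprove Theorem~\ref{thm:old} (it is imported from \citet{il:bounds}), but its own proof of the Wasserstein analogue, Theorem~\ref{police}, follows exactly your template --- performance difference lemma, rewrite the inner expectation as $\int_{\As} Q^{\pi_I}(s,a)(\pi_E-\pi_I)(\de a|s)$, then bound by duality, with the $\sup$-norm/oscillation of $Q^{\pi_I}$ replacing the Lipschitz semi-norm. You also correctly flag the only delicate point, namely that the factor $2$ depends on normalising $\tv$ as $\sup_A|\mu(A)-\nu(A)|$ (the convention the paper implicitly uses, e.g.\ in Example~1, despite stating the dual definition with $\|f\|_\infty\le 1$), which you resolve by centring $Q^{\pi_I}$ at $V^{\pi_I}(s)$.
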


As anticipated, this result is not suitable for continuous action spaces, since the TV between different policies would take its maximum value $1$ whenever one of the two policies is deterministic. The following example clarifies the issue.

\begin{exa}[label=exa:1]
Suppose that the action space is a real space $\As \subseteq \mathbb{R}^n$ and that both expert  $\pi_E$ and the imitator  $\pi_I$ policies are deterministic. A common way to perform BC is to minimize the \emph{mean squared error} (MSE) between the expert's action and the imitator one. Suppose we are able to provide the following guarantee on the MSE, for some $\varepsilon>0$:
\begin{align}\label{eq:mse}
   \E_{s \sim d^E} \left[\left\| \pi_E(s) - \pi_I(s)\right\|^2_2 \right] \le \varepsilon^2.
\end{align}
However, this condition provides no guarantee in TV. Indeed, by taking $\pi_I(s) = \pi_E(s) + \frac{\varepsilon}{\sqrt{n}} \mathbf{1}_n$, being $\mathbf{1}_n$ the vector of all $1$s, Equation~\eqref{eq:mse} is fulfilled, but we obtain: $\E_{s\sim d^{\pi_E}} [\tv(\pi_E(\cdot|s), \pi_I(\cdot|s))] = \E_{s\sim d^{\pi_E}} [\mathds{1}\{\pi_E(s) \neq \pi_I(s)\}] = 1$, where $\mathds{1}$ is the indicator function.
\end{exa}

\subsection{A Bound based on Wasserstein Distance}
Even if the existing analysis of \citet{il:bounds} cannot be applied in continuous action spaces, as shown in Example~\ref{exa:1}, it is not hard to leverage the regularity of the MDP to effectively bound the performance difference $J^{\pi_E}-J^{\pi_I}$.

\begin{thm}\label{police}
    Let $\pi_E$ be the expert policy and $\pi_I$ be the imitator policy. If that state-action value function $Q^{\pi_I}$ of the imitator policy $\pi_I$ is $L_{Q^{\pi_I}}$-LC, then it holds that:
    $$J^{\pi_{E}}-J^{\pi_{I}} \le \frac{L_{Q^{\pi_I}}}{1-\gamma}\E_{s\sim d^{\pi_{E}}}[\wass(\pi_I(\cdot|s), \pi_E(\cdot|s))].$$
\end{thm}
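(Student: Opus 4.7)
The plan is to combine the Performance Difference Lemma (PDL) of Kakade and Langford with the Kantorovich--Rubinstein dual characterization of the Wasserstein distance that was recalled in Section~\ref{sec:prelimMath}. The Lipschitz assumption on $Q^{\pi_I}$ will plug into the dual formula to convert an integral against a signed measure (the difference of two action distributions) into the Wasserstein distance multiplied by the Lipschitz constant.

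More concretely, I would first invoke the PDL in the form
\begin{equation*}
J^{\pi_E}-J^{\pi_I}=\frac{1}{1-\gamma}\E_{s\sim d^{\pi_E}}\Big[\E_{a\sim \pi_E(\cdot|s)}[Q^{\pi_I}(s,a)]-\E_{a\sim \pi_I(\cdot|s)}[Q^{\pi_I}(s,a)]\Big],
\end{equation*}
using the identity $V^{\pi_I}(s)=\E_{a\sim\pi_I(\cdot|s)}[Q^{\pi_I}(s,a)]$. This rewrites the performance gap as an expected integral of $Q^{\pi_I}(s,\cdot)$ against the signed measure $\pi_E(\cdot|s)-\pi_I(\cdot|s)$, exactly the object governed by the Wasserstein distance.

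Next, I would fix $s\in\Ss$ and observe that the hypothesis $\|Q^{\pi_I}\|_L\le L_{Q^{\pi_I}}$ (on the product metric used in Assumption~\ref{ass:LipMDP}) implies that $a\mapsto Q^{\pi_I}(s,a)$ is $L_{Q^{\pi_I}}$-Lipschitz on $(\As,d_{\As})$. Applying the definition of $\wass$ to $f(\cdot) = Q^{\pi_I}(s,\cdot)/L_{Q^{\pi_I}}$ (which has Lipschitz semi-norm $\le 1$) yields
\begin{equation*}
\Big|\E_{a\sim\pi_E(\cdot|s)}[Q^{\pi_I}(s,a)]-\E_{a\sim\pi_I(\cdot|s)}[Q^{\pi_I}(s,a)]\Big|\le L_{Q^{\pi_I}}\,\wass(\pi_I(\cdot|s),\pi_E(\cdot|s)).
\end{equation*}
Taking expectation with respect to $s\sim d^{\pi_E}$ and inserting into the PDL identity gives the claimed bound.

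I do not anticipate a serious technical obstacle: both ingredients are standard once the setup is in place. The only point that deserves care is confirming that Lipschitzness of $Q^{\pi_I}$ in the joint variable $(s,a)$ transfers to $L_{Q^{\pi_I}}$-Lipschitzness in $a$ alone for fixed $s$ under the additive product distance adopted in Assumption~\ref{ass:LipMDP}, so that the Kantorovich--Rubinstein duality applies with the same constant. Dropping the absolute value (to get the one-sided inequality in the statement) is immediate, since the PDL equality already has the right sign.
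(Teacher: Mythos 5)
Your proposal is correct and follows essentially the same route as the paper's proof: the performance difference lemma to express the gap as an expected integral of $Q^{\pi_I}(s,\cdot)$ against $\pi_E(\cdot|s)-\pi_I(\cdot|s)$, followed by the Kantorovich--Rubinstein dual bound using $\sup_{s}\|Q^{\pi_I}(s,\cdot)\|_L \le \|Q^{\pi_I}\|_L \le L_{Q^{\pi_I}}$. The point you flag about Lipschitzness in the joint variable transferring to the action alone is exactly the observation the paper makes in its final step.
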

\begin{proof}\label{proof:police}
    Using the \emph{performance difference lemma} \cite{kakade2002approximately}, we have:
    \begin{align*}
    J^{\pi_{E}}-J^{\pi_{I}} =&
    \frac{1}{1-\gamma}\E_{s\sim d^{\pi_{E}}}\left[\E_{a\sim \pi_{E}(\cdot|s)}[A^{\pi_I}(s,a)] \right],
    \end{align*}
    where $A^{\pi_I}(s,a) = Q^{\pi_I}(s,a) - V^{\pi_I}(s)$ is the advantage function. The inner expectation can be written as:
    \begin{align*}
        & \E_{a\sim \pi_{E}(\cdot|s)}[A^{\pi_I}(s,a)]\\
         & = \int_A Q^{\pi_{I}}(s,a)(\pi_{E}(\de a|s)-\pi_{I}(\de a|s)) \\
     & \le \sup_{s \in \Ss} \left\|Q^{\pi_I}(s,\cdot) \right\|_L \wass(\pi_E(\cdot|s), \pi_I(\cdot|s)),
    \end{align*}
    where the inequality follows from the definition of Wasserstein metric. The result is obtained by observing that $\sup_{s \in \Ss} \left\|Q^{\pi_I}(s,\cdot) \right\|_L \le  \left\|Q^{\pi_I} \right\|_L \le L_{Q^{\pi_I}}$.
\end{proof}

A similar bound was previously derived by~\cite[][Theorem 1]{PirottaRB15} and~\cite[][Theorem 2]{AsadiML18}. However, \cite{PirottaRB15} assume that the policy is LC \wrt a policy parametrization. Instead, the result of \cite{AsadiML18} involves the transition model instead of the policy and requires a bound uniform over $\SAs$ on the Wasserstein distance between the true and the estimated models. Let us now revisit Example~\ref{exa:1} in light of Theorem~\ref{police}.

\begin{exa}[continues=exa:1]
Under Equation~\eqref{eq:mse} , we can provide an effective guarantee on the Wasserstein distance:
\begin{align*}
    \E_{s\sim d^{\pi_E}} &  [\wass(\pi_E(\cdot|s), \pi_I(\cdot|s))] = \E_{s\sim d^{\pi_E}} [\left\| \pi_E(s) - \pi_I(s)\right\|_1]  \\
    & \quad \le {\mathbb{E}_{s \sim d^{\pi_E}} [\|\pi_E(s) - \pi_I(s)\|^2_2 ]}^{\frac{1}{2}}\le \varepsilon,
\end{align*}
where in the first inequality, we used Jensen's inequality.
\end{exa}

Comparing Theorem~\ref{police} with Theorem~\ref{thm:old}, we no longer require the uniform bound $R_{\max}$ on the reward function, but we introduce an additional assumption on the regularity of the imitator Q-function $Q^{\pi_I}$. Clearly, we should find suitable assumptions under which $L_{Q^{\pi_I}}$ is finite.
%The isue with the result of Theorem~\ref{police} is, of course, finding
As we anticipated in Section~\ref{sec:prelimMDP}, the only known result that provides such an estimate under the assumption of Lipschitz MDP and Lipschitz policy with Wasserstein metric is \cite{lyp:locality}, where the authors proved that, if $\gamma L_p(1+L_\pi)<1$ is satisfied, $ L_{Q^{\pi}}$ can be chosen as:
\begin{align}\label{eq:boundQ}
  L_{Q^{\pi}} \coloneqq \frac{L_r}{1-\gamma L_p(1+L_\pi)}.  
\end{align}
%This value of $L_{Q^{\pi}}$ has been widely used to provide theoretical guarantees for regular MDP~\cite[\eg][]{PirottaRB15}.
However, we argue that condition $\gamma L_p(1+L_\pi)<1$ is very demanding and often unrealistic. Indeed, to fulfill it we need at least one of these conditions to be satisfied:
%we argue that it is based on a condition that is very demanding and often unrealistic. Indeed, to satisfy $\gamma L_p(1+L_\pi)<1$, we need at least one of these conditions to be satisfied:
\begin{enumerate}[noitemsep, label=(\roman*), topsep=0pt]
    \item $\gamma \ll 1$: in practice, it is almost always false, since the discount factor is often chosen to be close to $1$;
    \item $L_p < 1$: this is a very unrealistic assumption, since it would make all the states shrink exponentially when the same actions are performed;
    \item $L_{\pi} \approx 0$: the action depends very little on the state so that there is a very limited possibility of controlling the environment (this condition alone is not even sufficient).
\end{enumerate}

\begin{figure}[t]
\centering
\includegraphics[width=\linewidth]{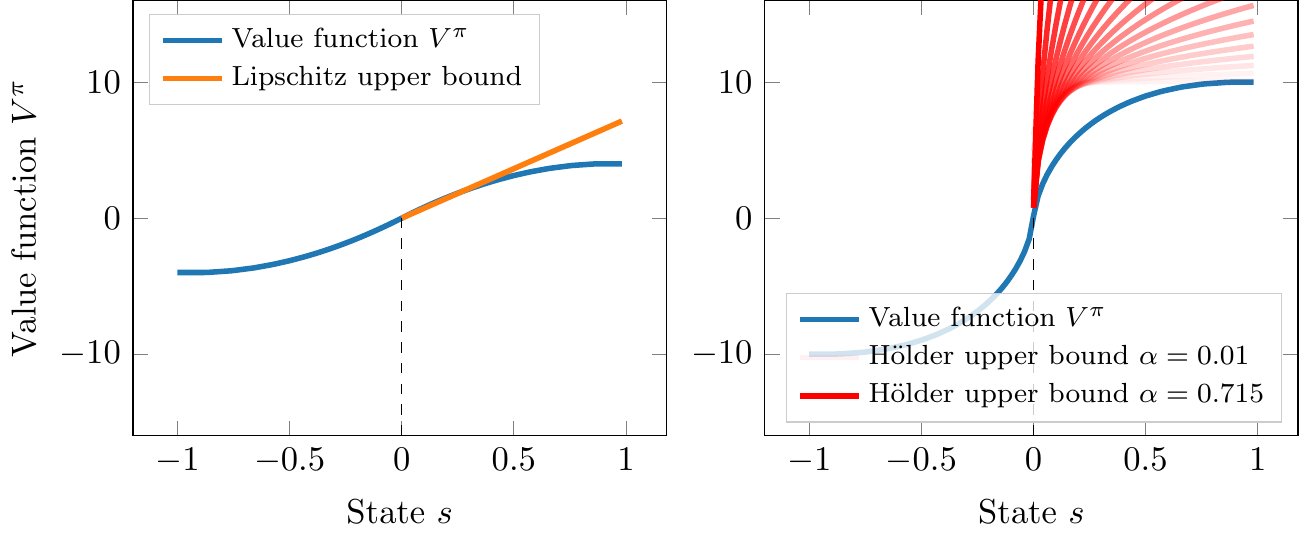}
\caption{State value functions of Example~\ref{ese1}. Left: the bound of \cite{lyp:locality} hold and it is tight. Right: the bound of \cite{lyp:locality} does not hold, but our bound based on H\"older continuity holds, for different values of $\alpha \in (0,1)$.}
\label{exa_fig}
\end{figure}

\subsection{The Tightness of the Value Function Lipschitz Constant}
It is legitimate to question whether the value $L_{Q^{\pi}}$ of Equation~\eqref{eq:boundQ}, widely employed in the literature~\citep[\eg][]{lyp:locality, PirottaRB15, AsadiML18}, is a tight approximation of the Lipschitz semi-norm $\|Q^{\pi}\|_L$. %or if, on the opposite side, it is improvable. 
In this section, we prove that the result cannot be improved, at least when requiring the Lipschitz continuity of the value function. Example~\ref{ese1} shows that the value function $Q^{\pi}$ can be made non-LC even when the MDP and the policy are LC, while Theorem~\ref{counterex} proves that a bound like the one of Theorem~\ref{police} cannot be obtained for a generic Lipschitz MDP and Lipschitz policies.

%The following example shows that the value function can be made non-Lipschitz continuous independently from the choice of the policy.
\begin{exa}[label=ese1]
    Let $\mathcal{M}$ be an MDP and $\pi$ be a policy defined as follows, given the constants $L_p,L_r > 0$:
    \begin{itemize}[noitemsep, leftmargin=*, topsep=0pt]
        \item $\Ss = [-1,1]$;
        \item $\As = \{0\}$;
        \item The dynamic is deterministic. From every state $s \in \Ss$, performing action $0$, the only possible, the environment moves to the state $s'=\mathrm{clip}(L_ps,-1,1)$.\footnote{$\mathrm{clip}(x,a,b)$ is the \emph{clipping} function, \ie $\max\{\min\{x, b\},a\}$.} This means that $p(\de s'|s,a)=\delta_{\mathrm{clip}(L_ps,-1,1)}(\de s')$;
        \item $r(s,a) = L_rs$;
        \item The initial state distribution is $\mu = \mathrm{Uni}([0,1])$ (not influential for the derivation that follows).
    \end{itemize}
    This MDP is $(L_p,L_r)$-LC and the policy has Lipschitz constant equal to $L_\pi = 0$, since there is one action only. Equation~\eqref{eq:boundQ} ensures that the state value function $V^{\pi}$ (that is equal to the state-action value function $Q^{\pi}$ since there is one action only) is LC with constant:
    $$L_{V^{\pi}} = \frac{L_r}{1-\gamma L_p}.$$
    Since the state space is one dimensional, we can compute the state value function $V^{\pi}$ exactly:
    $$V^{\pi}(s)=L_r\sum_{k=0}^{+\infty} \gamma^k\ \mathrm{clip}{\left(L_p^ks,-1,1\right)}, \qquad \forall s \in \Ss.$$
    As shown in Figure \ref{exa_fig} left, the point of maximal slope $s=0$. Even if we have employed the specific values $L_p=1.15$,  $L_r=1$, and $\gamma=0.75$, it is simple to see that this property is valid in general. Moreover, we have plotted in orange the line which passes through the origin, having slope equal to:
    $$L_{V^{\pi}} = \frac{L_r}{1-\gamma L_p}=\frac{1}{1-0.75\cdot 1.15}\approx 7.27,$$
    which is the tangent line to the state value function in $s=0$, as it also can be found analytically:
    $$\frac{\partial V^{\pi}}{\partial s}(0)=L_r\sum_{k=0}^{+\infty} \gamma^k L_p^k=\frac{L_r}{1-\gamma L_p}.$$
    This means that, in this case, the choice of the Lipschitz constant provided by the theory (Equation~\ref{eq:boundQ}) is actually tight.
    
    What happens if we reach the hard edge of $\gamma L_p (1 + L_\pi) = \gamma L_p> 1$, where Equation~\eqref{eq:boundQ} does not guarantee any property? 
    %The result is what follows.
    %If we surpass the critical line $\gamma L_p (1 + L_\pi) = 1$, 
    For instance, by taking $L_p=1.15$, $L_r=1$, and $\gamma=0.9$, we lose any Lipschitz property, finding a derivative which is unbounded, as shown in Figure \ref{exa_fig} right.
\end{exa}

Note that, in this example, we are able to find a non-LC state value function even in the apparently simple case of $\As=\{0\}$, where $L_\pi=0$. Therefore, this example also shows that the dynamics of the system alone is enough to make the state value function irregular. Furthermore, the same example can be adapted to prove that, for a generic Lipschitz MDP and a pair of Lipschitz policies, a bound like the one of Theorem~\ref{police} cannot be obtained in general.

\begin{restatable}[]{thm}{counterex}\label{counterex}
    There exist an $(L_p,L_r)$-LC MDP and an $L_\pi$-LC policy $\pi$ such that for every finite constant $C>0$, (even depending on $L_p$, $L_\pi$, and $L_r$), there exists an $L_{\pi}$-LC policy $\pi'$ such that:
    $$J^{\pi}-J^{\pi'} \ge C\E_{s\sim d^{\pi}}[\wass(\pi(\cdot|s),\pi'(\cdot|s))].$$
\end{restatable}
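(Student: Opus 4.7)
The plan is to extend the construction of Example~\ref{ese1} by enlarging the action space, so that we can compare two nearby constant policies whose induced trajectories diverge at an exponential rate. Concretely, take $\Ss=[-1,1]$, $\As=[-1,1]$, deterministic transitions $p(\cdot\vert s,a)=\delta_{\mathrm{clip}(L_p s+a,-1,1)}$, reward $r(s,a)=L_r s$, and initial distribution $\mu=\delta_0$. For any $L_p\ge 1$, this MDP satisfies Assumption~\ref{ass:LipMDP} with constants $(L_p,L_r)$, since the action Lipschitz constant of the transition is $1\le L_p$ and is therefore absorbed into the joint bound $L_p(d_{\Ss}+d_{\As})$. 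Pick $\gamma$ with $\gamma L_p>1$, placing the system outside the range where Equation~\eqref{eq:boundQ} provides a finite estimate.

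For the two policies, take $\pi\equiv 0$ and $\pi'\equiv -\epsilon$ with $\epsilon\in(0,1)$. Both are constant, hence $0$-LC, and therefore $L_\pi$-LC for any fixed $L_\pi\ge 0$. Under $\pi$ the state stays frozen at $0$, so $J^{\pi}=0$ and $d^{\pi}=\delta_0$; consequently
\begin{align*}
\E_{s\sim d^{\pi}}[\wass(\pi(\cdot\vert s),\pi'(\cdot\vert s))] \;=\; \wass(\delta_0,\delta_{-\epsilon}) \;=\; \epsilon.
\end{align*}

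Under $\pi'$, the trajectory from $s_0=0$ evolves as $s_t=-\epsilon(L_p^{t}-1)/(L_p-1)$ while $|s_t|<1$, and is absorbed at $s_t=-1$ afterwards; a short calculation gives absorption time $T^{*}=\Theta(\log(1/\epsilon)/\log L_p)$. Since $s_t\le 0$ throughout, keeping only the boundary tail yields
\begin{align*}
J^{\pi}-J^{\pi'} \;\ge\; \frac{L_r\,\gamma^{T^{*}}}{1-\gamma} \;\ge\; c_0\,\epsilon^{\alpha}, \qquad \alpha \coloneqq -\frac{\log\gamma}{\log L_p}.
\end{align*}
Because $\gamma L_p>1$, we have $\alpha\in(0,1)$, so the ratio $(J^{\pi}-J^{\pi'})/\E_{s\sim d^{\pi}}[\wass(\pi,\pi')]\ge c_0\,\epsilon^{\alpha-1}$ diverges as $\epsilon\to 0^{+}$. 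Given any $C>0$, choosing $\epsilon$ sufficiently small makes this ratio exceed $C$, proving the claim.

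The main conceptual obstacle is designing a single family that simultaneously satisfies Assumption~\ref{ass:LipMDP}, lies in the regime $\gamma L_p(1+L_\pi)>1$ where the Lipschitz constant of $Q^{\pi}$ from Equation~\eqref{eq:boundQ} blows up, and produces a sub-linear performance gap $\sim\epsilon^{\alpha}$ with $\alpha<1$, which is exactly what drives the divergence against the linear Wasserstein distance. Once the construction is in place, everything reduces to a geometric-sum computation for the trajectory and routine verification of the Lipschitz constants.
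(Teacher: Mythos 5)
Your proposal is correct and follows essentially the same route as the paper's proof: the same clipped-linear, deterministic MDP started at the origin with a linear reward, the same family of constant perturbation policies at Wasserstein distance $\Theta(\epsilon)$ from $\pi\equiv 0$, and the same mechanism whereby the exponential amplification of the action offset (in the regime $\gamma L_p>1$) forces a performance gap of order $\epsilon^{\alpha}$ with $\alpha<1$, so the ratio diverges as $\epsilon\to 0^{+}$. The only cosmetic differences are the parametrization of the transition ($\mathrm{clip}(L_ps+a)$ versus $\mathrm{clip}(L_p(s+a))$, a sign flip on the reward) and that you make the rate $\epsilon^{\alpha}$ explicit, whereas the paper's proof of this theorem just exhibits a diverging limit of the ratio.
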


The proof is reported in Appendix \ref{proof:counterex}.
If we set $\pi = \pi_E$ as the expert policy and $\pi'= \pi_I$ as an imitator policy, Theorem~\ref{counterex} shows that, even if the MDP and the policies are LC, we cannot, in general, upper bound the performance difference $J^{\pi_E}-J^{\pi_I}$ with the expected Wasserstein distance $\E_{s\sim d^{\pi}}[\wass(\pi_E(\cdot|s),\pi_I(\cdot|s))]$. This is in line with the fact that, without additional assumptions, \eg when $\gamma L_p (L_\pi+1)<1$ does not hold, Theorem~\ref{counterex} is vacuous. Therefore, these bounds cannot be improved in the framework of Lipschitz continuity, however, a weaker notion of regularity can be used to generalize the previous theorems.

\section{H\"older Continuity is All We Need}\label{sec:holder}
In this section, we propose an approach for overcoming the limitations of the Lipschitz continuity, discussed in the previous section. In Section~\ref{sec:holdVal}, we show that the state-action value function $Q^{\pi}$ is always H\"older continuous, provided that the MDP and the policy are LC. Then, in Section~\ref{sec:gap2}, we apply these findings to BC, deriving a bound on the performance difference $J^{\pi_E}-J^{\pi_I}$ in terms of the Wasserstein distance that holds for \emph{every} LC MDP and policy.

\subsection{The H\"older Continuity of the Value Function}\label{sec:holdVal}
The first step to improve the result of \cite{lyp:locality} is to observe that, like in Example \ref{ese1}, even when the value function is not Lipschitz continuous, it keeps being continuous. This observation is not, in principle, accounted for by the previous analysis, which provides no result when $\gamma L_p (1 + L_\pi) > 1$. This suggests that employing a notion of regularity that is stronger than continuity but weaker than Lipschitz continuity, as H\"older continuity, might lead to an improvement of the analysis. Indeed, we are able to prove the following generalization.

\begin{restatable}[H\"older-continuity of the Q-function]{thm}{holdercont}\label{holdercont}
    Let $\mathcal{M}$ be an $(L_p,L_r)$-LC MDP, let $\pi$ be an $L_{\pi}$-LC policy, and let
    $$0 < \alpha < \overline{\alpha} \coloneqq \min \left\{ 1, \frac{-\log \gamma}{\log (L_p(1+L_\pi))} \right\}.$$
    %If the mean reward function that is also $(\alpha,L_r)-$HC, 
    If  the state space $\Ss$ and the action space $\As$ admit finite diameter\footnote{The \emph{diameter} of a metric space $(\Xs,d_{\Xs})$ is defined as: $\diam{\Xs} = \sup_{x,x' \in \Xs} d_{\Xs}(x,x')$.} $\diam{\Ss}$ and $\diam{\As}$, respectively, then 
    the state-action value function $Q^\pi$ is $(\alpha,L_{Q^{\pi},\alpha})-$HC with a H\"older constant bounded by:
    $$L_{Q^{\pi},\alpha} \coloneqq \frac{L_r \left(\diam{\Ss} + \diam{\As}\right)^{1-\alpha}}{1-\gamma(L_p(1+L_\pi))^\alpha}.$$
\end{restatable}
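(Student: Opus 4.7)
The plan is to bound $|Q^\pi(s,a) - Q^\pi(s',a')|$ by controlling each summand in the series expansion $Q^\pi(s,a) = \sum_{t=0}^{+\infty}\gamma^t \E[r(s_t,a_t)\mid s_0=s,a_0=a]$. Let $\rho_t^{s,a}$ denote the law of $(s_t,a_t)$ starting from $(s_0,a_0)=(s,a)$ and following $\pi$. Since $r$ is $L_r$-LC on $\Ss\times\As$ with the sum metric (a direct consequence of Assumption~\ref{ass:LipMDP}), the dual definition of $\wass$ yields
$$\bigl|\E_{\rho_t^{s,a}}[r] - \E_{\rho_t^{s',a'}}[r]\bigr| \le L_r\,\wass\bigl(\rho_t^{s,a},\rho_t^{s',a'}\bigr),$$
so the whole argument reduces to controlling how the joint laws separate in Wasserstein distance.

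I would then establish the one-step propagation result: the kernel that maps $(s,a)$ to the law of $(s_1,a_1)$ (first sample $s_1\sim p(\cdot\mid s,a)$, then $a_1\sim\pi(\cdot\mid s_1)$) is $L_p(1+L_\pi)$-Lipschitz in the Wasserstein sense. The argument glues two couplings: start from the optimal Wasserstein coupling of $p(\cdot\mid s,a)$ and $p(\cdot\mid s',a')$, then, conditionally on the sampled next-states, use the optimal coupling of the corresponding policy measures. Taking expectations with Assumptions~\ref{ass:LipMDP}--\ref{ass:LipPol} yields the stated Lipschitz constant. Iterating $t$ times, starting from the trivial $t=0$ case, gives $\wass(\rho_t^{s,a},\rho_t^{s',a'}) \le (L_p(1+L_\pi))^t(d_\Ss(s,s')+d_\As(a,a'))$.

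The key idea beyond the Lipschitz analysis is an \emph{interpolation step} exploiting the boundedness hypothesis. Alongside the propagation bound, the trivial estimate $\wass(\rho_t^{s,a},\rho_t^{s',a'}) \le \diam{\Ss}+\diam{\As}$ always holds. For any $\alpha\in(0,1]$, the elementary inequality $\min(A,B)\le A^\alpha B^{1-\alpha}$ combines the two bounds into
$$\wass\bigl(\rho_t^{s,a},\rho_t^{s',a'}\bigr) \le (L_p(1+L_\pi))^{t\alpha}\bigl(d_\Ss(s,s')+d_\As(a,a')\bigr)^\alpha\bigl(\diam{\Ss}+\diam{\As}\bigr)^{1-\alpha}.$$
Substituting into the summand and collecting produces the geometric series $\sum_{t\ge 0}\bigl[\gamma(L_p(1+L_\pi))^\alpha\bigr]^t = 1/\bigl(1-\gamma(L_p(1+L_\pi))^\alpha\bigr)$, reproducing exactly the claimed $L_{Q^\pi,\alpha}$. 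Convergence of this series requires $\gamma(L_p(1+L_\pi))^\alpha<1$, \ie $\alpha<-\log\gamma/\log(L_p(1+L_\pi))$ when $L_p(1+L_\pi)>1$; together with $\alpha\le 1$ (above which H\"older continuity would force $Q^\pi$ constant), this is precisely $\alpha<\overline\alpha$.

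The only delicate step is the Wasserstein-Lipschitz propagation of the joint law through the composed state-and-action kernel, since it relies on a proper gluing of conditional couplings (and measurability of the coupling map). Once that is in place, the interpolation trick and the geometric-series bookkeeping follow routinely, and the bound is visibly sharp in the sense that, as $\alpha\to\overline\alpha$, the H\"older constant $L_{Q^\pi,\alpha}$ diverges, consistent with the pathology exhibited in Example~\ref{ese1}.
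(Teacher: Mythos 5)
Your proof is correct, but it takes a genuinely different route from the paper's. The paper works with the Bellman iterates $Q_{n+1}=r+\gamma P V_n$ and proves by induction that each $Q_n$ is $(\alpha,L_n)$-HC: the Hölderization is split between the reward term (where the diameter converts the Lipschitz bound $L_r(d_{\Ss}+d_{\As})$ into $L_r(\diam{\Ss}+\diam{\As})^{1-\alpha}(d_{\Ss}+d_{\As})^{\alpha}$) and the transition term (handled by the Hölder--Kantorovich duality of Proposition~\ref{cielo} applied to the HC function $V_n$, via Proposition~\ref{QtoV}); this yields the recurrence $L_{n+1}=L_r(\diam{\Ss}+\diam{\As})^{1-\alpha}+\gamma\left(L_p(1+L_\pi)\right)^{\alpha}L_n$, and one concludes by passing to the limit using the fact that limits of $(\alpha,L)$-HC functions are $(\alpha,L)$-HC (Lemma~\ref{arf}). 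You instead unroll the return over time, propagate the Wasserstein distance of the joint state--action laws $\rho_t$ forward at rate $L_p(1+L_\pi)$ per step, and perform a single interpolation $\min(A,B)\le A^{\alpha}B^{1-\alpha}$ between the exponential propagation bound and the trivial diameter bound; the geometric series then reproduces the identical constant. Your route makes the origin of the critical exponent $\overline{\alpha}$ and the role of the diameters more transparent, and dispenses with Lemma~\ref{arf} and Proposition~\ref{QtoV} as separate ingredients; its cost is the kernel-lifting step you flag yourself, namely upgrading the pointwise bound $\wass(K\delta_{(s,a)},K\delta_{(s',a')})\le L_p(1+L_\pi)\left(d_{\Ss}(s,s')+d_{\As}(a,a')\right)$ to $\wass(K\mu,K\nu)\le L_p(1+L_\pi)\,\wass(\mu,\nu)$ so that it can be iterated $t$ times, which requires the gluing lemma together with a measurable selection of (near-)optimal couplings on Polish spaces, or equivalently a dual-formulation argument of the kind the paper uses. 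Modulo stating that lemma explicitly (and reading $\overline{\alpha}=1$ when $L_p(1+L_\pi)\le 1$), your argument is complete and yields exactly the claimed $L_{Q^{\pi},\alpha}$.
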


The proof is reported in Appendix \ref{app:holder}. The requirement on the finiteness of the diameter of the state and action spaces is a mild condition. Indeed, it is common to assume that these spaces are bounded (or even compact), which implies a finite diameter. Furthermore, as commonly done in practice, one can easily re-scale the states and actions, modifying the diameter and not altering the nature of the problem.\footnote{As an alternative, one could assume that the reward function $r(s,a)$ is $(\alpha,L_r)$-HC, removing the need for the diameters.} 
%For this reasons, in the subsequent derivations, we will assume that $\diam{\Ss} + \diam{\As} \le 1$. 
Furthermore, we can easily obtain the H\"older constant of the state value function $V^{\pi}$.

\begin{restatable}[H\"older-continuity of the V-function]{prop}{QtoV}\label{QtoV}
     Let $\pi$ be an $L_{\pi}$-LC policy. If the state-action value function $Q^{\pi}$ is $(\alpha,L_{Q^{\pi},\alpha})$-HC, then the corresponding state value function $V^{\pi}$ is $(\alpha,L_{V^{\pi},\alpha})$-HC with:
    \begin{align*}
        L_{V^{\pi},\alpha} \coloneqq L_{Q^{\pi},\alpha} (L_{\pi}+1)^\alpha.
    \end{align*}
\end{restatable}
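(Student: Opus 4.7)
The plan is to work directly from the definition $V^\pi(s) = \mathbb{E}_{a \sim \pi(\cdot|s)}[Q^\pi(s,a)]$ and to bound the gap between two evaluations by reducing it to the Wasserstein distance between $\pi(\cdot|s)$ and $\pi(\cdot|s')$, using the H\"older continuity of $Q^\pi$. The natural tool for this reduction is a coupling argument, in the spirit of the proof of Theorem~\ref{police}.

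More precisely, fix $s,s'\in\Ss$ and let $\gamma_{ss'}\in\PM{\As\times\As}$ be an optimal coupling attaining $\wass(\pi(\cdot|s),\pi(\cdot|s'))$. Then I would write
\begin{align*}
V^\pi(s)-V^\pi(s') = \int_{\As\times\As} \bigl(Q^\pi(s,a)-Q^\pi(s',a')\bigr)\,\gamma_{ss'}(\de a,\de a'),
\end{align*}
and apply the $(\alpha,L_{Q^\pi,\alpha})$-H\"older continuity of $Q^\pi$ on the product metric to obtain
\begin{align*}
|V^\pi(s)-V^\pi(s')| \le L_{Q^\pi,\alpha} \int_{\As\times\As} \bigl(d_{\Ss}(s,s')+d_{\As}(a,a')\bigr)^\alpha \gamma_{ss'}(\de a,\de a').
\end{align*}

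The key step is then to pull the exponent outside the integral via Jensen's inequality. Since $\alpha\in(0,1]$ the map $x\mapsto x^\alpha$ is concave on $[0,\infty)$, so
\begin{align*}
\int \bigl(d_{\Ss}(s,s')+d_{\As}(a,a')\bigr)^\alpha \gamma_{ss'}(\de a,\de a') \le \left(d_{\Ss}(s,s')+\int d_{\As}(a,a')\gamma_{ss'}(\de a,\de a')\right)^{\alpha}.
\end{align*}
Because $\gamma_{ss'}$ is optimal, the inner integral equals $\wass(\pi(\cdot|s),\pi(\cdot|s'))$, which by Assumption~\ref{ass:LipPol} is at most $L_\pi d_{\Ss}(s,s')$. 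Plugging in and factoring out $d_{\Ss}(s,s')^\alpha$ yields the desired bound $L_{Q^\pi,\alpha}(L_\pi+1)^\alpha d_{\Ss}(s,s')^\alpha$.

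I do not expect a real obstacle here: the proof is a short coupling-plus-Jensen argument. The only minor subtlety is using Jensen in the right direction (concavity of $x^\alpha$ for $\alpha\le 1$), which is what allows the $(1+L_\pi)^\alpha$ factor rather than the weaker $1+L_\pi^\alpha$ that a naive application of subadditivity $(x+y)^\alpha\le x^\alpha+y^\alpha$ would give.
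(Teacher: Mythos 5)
Your proof is correct and yields the stated constant $(1+L_\pi)^\alpha$, but it is routed somewhat differently from the paper's. The paper rewrites $V^\pi(s)-V^\pi(s')$ as an integral of $Q^\pi$ against the difference of the product measures $\delta_s(\cdot)\pi(\cdot|s)$ and $\delta_{s'}(\cdot)\pi(\cdot|s')$ on $\Ss\times\As$, invokes Proposition~\ref{cielo} to reduce to $\wass(\delta_s(\cdot)\pi(\cdot|s),\delta_{s'}(\cdot)\pi(\cdot|s'))^\alpha$, and then spends the bulk of the proof bounding that product-space Wasserstein distance by $(1+L_\pi)\,d_{\Ss}(s,s')$ via a triangle inequality and two separate dual-norm estimates. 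You instead couple only the action marginals $\pi(\cdot|s)$ and $\pi(\cdot|s')$, keep the states deterministic inside the coupling, and apply Jensen's inequality for the concave map $t\mapsto t^\alpha$ directly; this effectively inlines Proposition~\ref{cielo} (whose own proof is exactly your coupling-plus-Jensen step) and skips the product-measure bookkeeping, which is arguably cleaner. Your closing remark is also on point: concavity gives $(1+L_\pi)^\alpha$, matching the statement, whereas subadditivity $(x+y)^\alpha\le x^\alpha+y^\alpha$ would only give the larger constant $1+L_\pi^\alpha$. The only details worth a sentence in a polished write-up are the existence of an optimal coupling (or a limiting argument with $\varepsilon$-optimal couplings) and the equivalence of the coupling characterization of $\wass$ with the dual formula the paper uses as its definition --- both standard, and both already relied upon in the paper's Proposition~\ref{cielo}.
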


These result represent a generalization of those of \cite{lyp:locality}, which are obtained by setting $\alpha=1$.

Moreover, this Theorem~\ref{holdercont} implies that the value functions of an LC MDP and policy is always continuous, since any HC function is also continuous, regardless of its constants, as it seemed from the previous example.
Coming back to Example~\ref{ese1}, we can perform  further analyses.

\begin{exa}[continues=ese1]
    We can use Theorem \ref{holdercont} to provide an upper bound on the value function even if the Lipschitz continuity does not hold. 
    The critical exponent is given by: 
    $$\overline{\alpha} = -\frac{\log\gamma}{\log(L_p(1+L_\pi))}\approx 0.72.$$
    For every value of $\alpha < \overline{\alpha}$, the state value function $V^{\pi}$ is $(\alpha,L_{V^{\pi},\alpha})-$HC. As we can see in Figure \ref{exa_fig} right, for small $\alpha$, the bound provided by $L_{V^{\pi},\alpha} |s|^\alpha$ is tight for $s\to 1$.
\end{exa}

\subsection{A More General Bound based on Wasserstein Distance}\label{sec:gap2}
Similarly to what we have done in Section~\ref{sec:firstBound}, to a result of regularity, we are able to associate a result about the loss of BC, bounding the difference in performance between two policies with their Wasserstein distance. Indeed, thank to Theorem~\ref{holdercont}, we can prove the following bound.

\begin{restatable}[Optimal Error Rate for BC]{thm}{BCfond}\label{BCfond}
     Let $\pi_E$ be the expert policy and $\pi_I$ be the imitator policy. If that state-action value function $Q^{\pi_I}$ of the imitator policy $\pi_I$ is $(\alpha,L_{Q^{\pi_I},\alpha})$-HC, then it holds that:
     $$ J^{\pi_{E}}-J^{\pi_{I}}\le \frac{L_{Q^{\pi_I},\alpha}}{1-\gamma} \E_{s\sim d^{\pi_{E}}}\left[\wass\left(\pi_E(\cdot|s),\pi_I(\cdot|s)\right)^\alpha \right].$$
     
     Furthermore, if the MDP $\mathcal{M}$ is $(L_p,L_r)$-LC and the imitator policy $\pi_I$ is $L_{\pi_I}$-LC, the bound is tight for what concerns the exponent $\alpha$ that cannot be improved above the critical value $\overline{\alpha}$ of Theorem~\ref{holdercont}.
%A conterexample shows that no better bound is possible.
\end{restatable}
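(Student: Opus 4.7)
The plan has two parts: first, establishing the upper bound as a Hölder analogue of Theorem~\ref{police}, and second, arguing tightness of the exponent $\alpha$.

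\textbf{Upper bound.} I would follow the skeleton of the proof of Theorem~\ref{police}, but replace the Kantorovich--Rubinstein duality step (which is specific to Lipschitz test functions) with a coupling argument tailored to Hölder test functions. Starting from the performance difference lemma,
\begin{equation*}
J^{\pi_E}-J^{\pi_I} = \frac{1}{1-\gamma}\,\E_{s\sim d^{\pi_E}}\!\left[\int_\As Q^{\pi_I}(s,a)\bigl(\pi_E(\de a|s)-\pi_I(\de a|s)\bigr)\right],
\end{equation*}
I would fix $s$ and, for any coupling $\kappa \in \PM{\As \times \As}$ of the marginals $\pi_E(\cdot|s)$ and $\pi_I(\cdot|s)$, rewrite the inner integral as $\int (Q^{\pi_I}(s,a) - Q^{\pi_I}(s,a'))\,\kappa(\de a,\de a')$. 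The Hölder assumption on $Q^{\pi_I}(s,\cdot)$ gives pointwise the bound $L_{Q^{\pi_I},\alpha}\, d_{\As}(a,a')^\alpha$; integrating and then applying Jensen's inequality to the concave map $t\mapsto t^\alpha$ (valid since $\alpha\leq 1$) yields
\begin{equation*}
\int_\As Q^{\pi_I}(s,a)\bigl(\pi_E - \pi_I\bigr)(\de a|s) \;\le\; L_{Q^{\pi_I},\alpha}\left(\int d_{\As}(a,a')\,\kappa(\de a,\de a')\right)^{\!\alpha}.
\end{equation*}
Taking the infimum over couplings $\kappa$ converts the right-hand side into $L_{Q^{\pi_I},\alpha}\,\wass(\pi_E(\cdot|s),\pi_I(\cdot|s))^\alpha$, and substituting back into the performance difference identity gives the stated inequality.

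\textbf{Tightness of $\alpha$.} Here the goal is to rule out any improvement of the Hölder exponent beyond $\overline{\alpha}$ given by Theorem~\ref{holdercont}. The plan is to reuse the construction of Example~\ref{ese1}, suitably modified to admit a nontrivial action space, and to follow the same strategy as in the proof of Theorem~\ref{counterex} (Appendix~\ref{proof:counterex}). Concretely, I would exhibit a family of $L_\pi$-LC imitator policies $\pi_I^{(\varepsilon)}$ deviating from $\pi_E$ by a small perturbation of scale $\varepsilon$ near the point where the derivative of $Q^{\pi_E}$ blows up (the origin in Example~\ref{ese1}). For this family, $\E_{s\sim d^{\pi_E}}[\wass(\pi_E(\cdot|s),\pi_I^{(\varepsilon)}(\cdot|s))^{\alpha'}]=\Theta(\varepsilon^{\alpha'})$, while a direct computation using the geometric contraction/expansion induced by $L_p(1+L_\pi)>\gamma^{-1/\alpha'}$ shows that the performance gap behaves like $\Theta(\varepsilon^{\overline{\alpha}})$. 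Choosing $\alpha'>\overline{\alpha}$ and letting $\varepsilon\to 0$ then makes the ratio $(J^{\pi_E}-J^{\pi_I^{(\varepsilon)}})/\E_{s\sim d^{\pi_E}}[\wass^{\alpha'}]$ diverge, contradicting any bound with exponent $\alpha'>\overline{\alpha}$.

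\textbf{Main obstacle.} The first part is a largely mechanical adaptation of Theorem~\ref{police}; the key technical move (Jensen after coupling) is standard once one recognizes that a Hölder test function does not admit a clean Kantorovich-type dual. The genuine difficulty lies in the tightness argument: one must carefully quantify both the Wasserstein scaling of the perturbed policies and the exact exponent at which the value function singularity is realized, in order to witness the threshold $\overline{\alpha}$ sharply. In particular, matching the upper and lower scalings requires that the constructed example saturate the geometric series that defines $\overline{\alpha}$, which is delicate when $L_\pi>0$ (since then the policy Lipschitz constant also contributes to the critical exponent). This is essentially the same kind of construction as in Theorem~\ref{counterex}, and the cleanest route is to reduce the Hölder-tightness claim to that Lipschitz counterexample by a change of metric argument, noting that a $(\alpha',L)$-HC bound with $\alpha'>\overline{\alpha}$ would, by the same derivation as above, produce an effective Lipschitz-type inequality contradicting Theorem~\ref{counterex} applied at the appropriately rescaled MDP.
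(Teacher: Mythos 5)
Your proposal is correct and follows essentially the same route as the paper: the upper bound is the performance difference lemma combined with the coupling-plus-Jensen argument for Hölder test functions (which the paper isolates as Proposition~\ref{cielo}), and the tightness claim is established via the clipped-dynamics counterexample of Theorem~\ref{counterex}, computing that the geometric sum makes the performance gap scale as $\Theta(\delta^{\overline{\alpha}})$ while the Wasserstein distance scales as $\delta$. The only cosmetic difference is that you inline the Hölder--Wasserstein inequality rather than invoking it as a separate lemma.
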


The proof is reported in Appendix ~\ref{app:holder}. This is a remarkable result, since it is valid for every LC MDP and policy. 
%Regularity turns out to be fundamental, as both the constant $L_{Q^{\pi_I},\alpha}$ and $\alpha$ govern the bound. 
As expected, a low value of $\alpha$ leads to a looser bound. Unfortunately, this bound, despite being tight in the exponent, is difficult to manage in practice. Indeed, in order to minimize the right-hand side, $\E_{s\sim d^{\pi_{E}}} [\wass(\pi_E(\cdot|s),\pi_I(\cdot|s))^\alpha  ]$, one should know the value $\alpha$ in advance. However, $\alpha \le \overline{\alpha}$ depends on the Lipschitz constants of the environment and of the policy, which are usually unknown. Therefore, no imitation learning algorithm can be trained to minimize this error explicitly. Fortunately, we can see that, weakening this result, we can obtain a more practical guarantee. Since $0 < \alpha < 1$, we can apply Jensen's inequality to obtain:
\begin{align}\label{eq:Jen}
   \!\!\! J^{\pi_{E}}-J^{\pi_{I}}\le \frac{L_{Q^{\pi_I},\alpha}}{1-\gamma} \E_{s\sim d^{\pi_{E}}}\left[\wass\left(\pi_E(\cdot|s),\pi_I(\cdot|s)\right) \right]^\alpha. \!\!\!
\end{align}

In this formulation, we minimize the expected Wasserstein distance only, and the knowledge of $\alpha$ is not needed, but its value impacts the kind of guarantee we can provide.
\begin{remark}\label{eur1}
 If we perform BC in a $(L_p,L_r)$-LC MDP and with a $L_{\pi_I}$-LC imitator policy $\pi_I$, the best possible performance guarantee (from Equation~\ref{eq:Jen}) is given by:
    $$J^{\pi_{E}}-J^{\pi_{I}} \le \mathcal{O}(\varepsilon^\alpha),$$
    where $\varepsilon$ is the square root of the imitation MSE, \ie $\varepsilon^2 = \E_{s \sim d^{\pi_E}}[\| \pi_E(s) - \pi_I(s)\|^2_2]$ as defined in Example~\ref{exa:1}, and $\alpha <\overline{\alpha} = -\frac{\log\gamma}{\log(L_p(1+L_{\pi_I}))}$, the critical exponent.
\end{remark}

Therefore, a very low value of $\alpha$, corresponding to lack of regularity, can badly influence the possibility of learning a good imitator policy.

\section{Noise Injection}\label{sec:noiseInj}
BC may struggle when the regularity assumptions are lacking. However, in practice, using a \emph{noisy} expert policy may significantly help the learning process~\cite{laskey2017dart}. This empirical benefit is justified by the intuition that noise helps in exploring the neighborhood of the expert trajectories. In this section, we formulate this empirical evidence in a mathematically rigorous way. Indeed, we show how to break the barrier enforced by Theorem~\ref{BCfond}, whose result is obtained by a deterministic expert. Clearly, these advantages come with the price that a noisy expert might experience a loss in expected return compared to the deterministic one.

\begin{figure*}[t]
\centering
\includegraphics[width=0.8\textwidth]{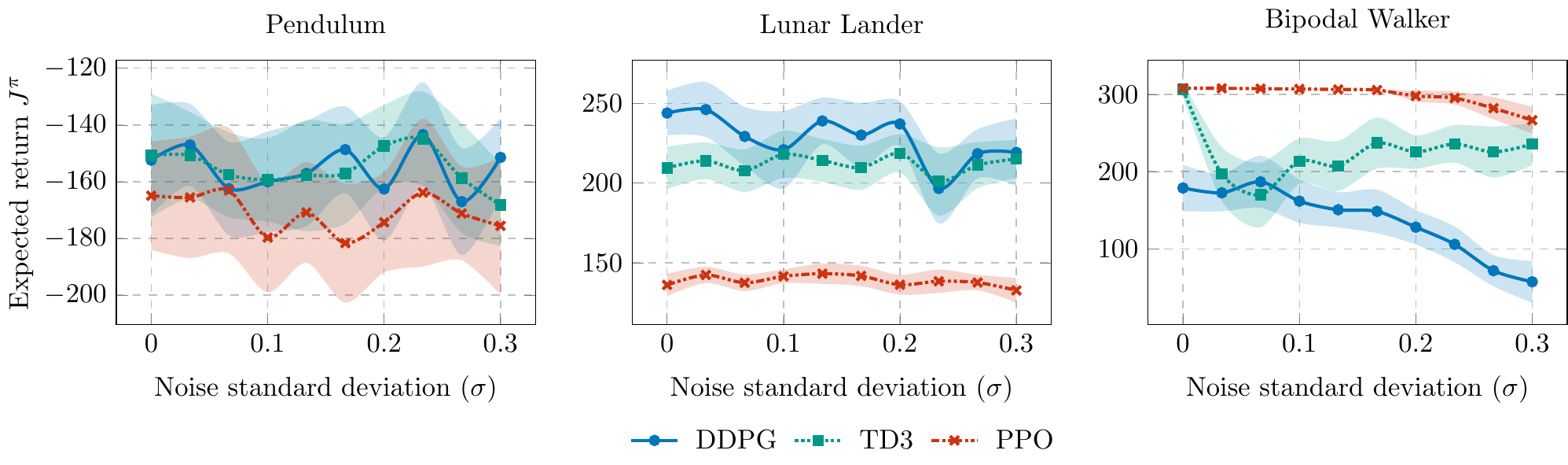}
\caption{The performance of the expert $J^{\pi}$ as a function of the standard deviation of the noise $\sigma$. 
The performance is measured on 40 episodes int environment repeated for 20 different random seeds (nuance represents the $95\%$ non-parametric c.i.).}
\label{noisy_effect}
\end{figure*}

\subsection{Noise Injection: a Mathematical Formulation}
The simplest form of noise injection is realized by adding to the expert's action $a_{E,t}$ a noise component $\eta_t$. In particular, assuming that the action space is real, \ie $\As \subseteq \mathbb{R}^n$, we have:
\begin{align}\label{eq:noisInj}
   \forall t \in \mathbb{N}: \qquad \begin{cases}
   a_{t,E} \sim \pi_E(\cdot|s_t)\\
   \eta_t \stackrel{\text{iid}}{\sim} \mathcal{L} \\
    a_t = a_{t,E} + \eta_t\\
\end{cases},
\end{align}
where $\{\eta_t\}_{t \in \mathbb{N}}$ is a noise sequence whose components are independent between each other and from the sequences of states and actions, and identically distributed by law $\mathcal{L} \in \PM{\As}$. 
If $\mathcal L$ admits a density function, we can express the density function of the played action $a_t$ as the convolution of the expert policy density function $\pi_E$ and the density function $\ell$ of the noise law $\mathcal L$.
Note that the formalization in Equation~\eqref{eq:noisInj} encompasses distributions that do not correspond to the intuitive idea of \emph{noise} (\eg when $\mathcal L$ is a discrete law). To obtain a meaningful result, we enforce the following assumption.
\begin{ass}\label{noiseass}
    The law of the noise $\mathcal L$ admits a density function \wrt a reference measure $\ell : \mathbb{R}^n \rightarrow \mathbb{R}_{\ge 0}$ and is TV-LC (see Definition \ref{tv_lips}) with constant $L_{\ell}$.
\end{ass}
Under this assumption, denoting with $\pi_{E,\ell}$ the policy with noise injection, \ie $a_t \sim \pi_{E,\ell}(\cdot|s_t)$, we have that:
\begin{equation*}\label{noise_def}\pi_{E,\ell}( a|s)=\int_{\R^n} \pi_E( a'|s)\ell(a-a')  \de a', \;\; \forall (s,a) \in \SAs. \end{equation*}
This represents the convolution of the policy density function $\pi_{E}$ and the noise density function $\ell$. In other words, this shows that the action taken by the expert policy $a_{E,t}$ is averaged over the noise probability distribution.

Assumption~\ref{noiseass} covers the most common types of noise, like the Gaussian or the uniform ones. In fact, we can prove that every univariate unimodal distribution satisfies Definition~\ref{tv_lips} (see Proposition \ref{unimodal} in the Appendix \ref{app:math}).
Considering multivariate Gaussian noise, we directly derive the $L_{\ell}$ constant.

\begin{exa}
    Suppose the noise is sampled from a zero-mean Gaussian distribution $\mathcal{N}(0,\Sigma)$ with covariance matrix ${\Sigma}$, the previous integral writes, for all $(s,a) \in \SAs$:
    $$\pi_{E,\ell}(a|s)=\int_{\R^n} \pi_E(a'|s)\underbrace{\frac{e^{-\frac{1}{2} (a-a')^T \Sigma^{-1}(a-a')}}{(2\pi)^{n/2}\mathrm{det}({\Sigma})^{1/2}}}_{\ell(a-a')} \de a',$$
    where we recognise the Gaussian $n$-variate density $\ell$. Assumption \ref{noiseass} is verified since, for $h \in \mathbb{R}^n$:
    \begin{align*}
        \tv{\left(\mathcal{N}(h,\Sigma),\mathcal{N}(0,\Sigma)\right)} & \le \sqrt{\frac{1}{2}\mathrm{KL}{\left(\mathcal{N}(h,\Sigma),\mathcal{N}(0,\Sigma)\right)}}\\
        &= \frac{1}{2} \left\| h \right\|_{\Sigma^{-1}} \le \frac{1}{2 \sqrt{s_{\min}(\Sigma)}} \|h\|_2,
    \end{align*}
    where we used Pinsker's inequality and $s_{\min}(\cdot)$ denoted the minimum singular value of a matrix.
    In particular, if $\Sigma$ is diagonal as $\sigma^2 I$, we have that $L_\ell=1/(2\sigma)$.
\end{exa}

It is worth noting that, in the diagonal covariance case, $L_\ell$ is proportional to $\sigma^{-1}$. This suggests that, the smaller the impact of the noise $\mathcal{L}$, \ie the smaller the standard deviation $\sigma$, the higher the constant $L_\ell$. %This general property is in accordance with intuition. 
Indeed, as $\sigma$ decreases, the regularization effect of the noise becomes less relevant (in the limit $\sigma\rightarrow 0$, noise injection vanishes).

\subsection{A Bound based on Wasserstein Distance for Noise Injection}
We are now able to prove a performance guarantee for BC with noise injection. The idea is based on a simple yet interesting fact. We can use the noise to smooth a bounded function, as in Proposition \ref{smooth}. Applying this approach to the state-action value function, leads to the following result. 

\begin{restatable}[]{thm}{mainSmooth}\label{main_smooth}
Let $\pi_E$ be the expert policy and $\pi_I$ be the imitator policy. Let us suppose that we have injected a noise of density function $\ell$, satisfying Assumption \ref{noiseass} to obtain a noisy expert $\pi_{E,\ell}$ and a noisy imitator $\pi_{I,\ell}$. If $|Q^{\pi_I}(s,a)| \le Q_{\max}$ for all $(s,a) \in \SAs$, it holds that:
    $$J^{\pi_{E,\ell}}-J^{\pi_{I,\ell}}\le \frac{2 L_\ell Q_{\max}}{1-\gamma}\E_{s\sim d^{\pi_{E,\ell}}}[\wass(\pi_E(\cdot|s),\pi_I(\cdot|s))].$$
\end{restatable}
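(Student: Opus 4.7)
The plan is to combine the performance difference lemma with a convolution-smoothing trick: rather than assuming regularity of the Q-function, as in Theorem~\ref{police}, the injected noise performs the smoothing for free via Proposition~\ref{smooth}. First, I apply the performance difference lemma to the policies $\pi_{E,\ell}$ and $\pi_{I,\ell}$ that are actually executed in the environment, obtaining
$$J^{\pi_{E,\ell}} - J^{\pi_{I,\ell}} = \frac{1}{1-\gamma}\E_{s\sim d^{\pi_{E,\ell}}}\!\left[\int_{\As} Q^{\pi_{I,\ell}}(s,a)\bigl(\pi_{E,\ell}(\de a|s)-\pi_{I,\ell}(\de a|s)\bigr)\right],$$
where the $V^{\pi_{I,\ell}}(s)$ term cancels exactly as in the proof of Theorem~\ref{police}.

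The crucial step is to exploit the convolution structure $\pi_{X,\ell}(\cdot|s)=\pi_X(\cdot|s)*\ell$ for $X\in\{E,I\}$ and swap the order of integration via Fubini (justified by boundedness of $Q^{\pi_{I,\ell}}$ and integrability of $\ell$). After the change of variable $a = a'+\eta$ inside the noise integral, the inner integral rewrites as
$$\int_{\As}\bar Q(s,a')\bigl(\pi_E(\de a'|s)-\pi_I(\de a'|s)\bigr),\qquad \bar Q(s,a'):=\int_{\R^n}Q^{\pi_{I,\ell}}(s,a'+\eta)\,\ell(\eta)\de\eta,$$
that is, an integral of the noise-smoothed Q-function $\bar Q$ against the \emph{non-noisy} policies $\pi_E$ and $\pi_I$. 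Since $\|Q^{\pi_{I,\ell}}(s,\cdot)\|_\infty \le Q_{\max}$ (the same uniform bound that governs $Q^{\pi_I}$ under the bounded-reward regime propagates to $Q^{\pi_{I,\ell}}$), and since the reflected density is again $L_\ell$-TV-LC by symmetry of the total variation, Proposition~\ref{smooth} applied with $M=Q_{\max}$ yields $\|\bar Q(s,\cdot)\|_L \le 2 L_\ell Q_{\max}$ uniformly in $s$. Kantorovich--Rubinstein duality then bounds the inner integral by $2L_\ell Q_{\max}\,\wass(\pi_E(\cdot|s),\pi_I(\cdot|s))$, and taking the outer expectation delivers the claim.

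I expect the main obstacle to be the Fubini exchange together with the clean identification of $\bar Q(s,\cdot)$ as a convolution of the right shape, so that Proposition~\ref{smooth} can be invoked as a function of the second argument alone; once this bookkeeping is settled, everything else parallels the proof of Theorem~\ref{police}. The essential conceptual gain is that the noise kernel $\ell$ absorbs the need for Lipschitz continuity of the Q-function into the explicit factor $2L_\ell Q_{\max}$, with no smoothness assumption on the MDP or the policies.
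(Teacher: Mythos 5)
Your proposal is correct and follows essentially the same route as the paper's proof: performance difference lemma on the noisy policies, Fubini to transfer the convolution from the policies onto $Q^{\pi_{I,\ell}}$, Proposition~\ref{smooth} to get the $2L_\ell Q_{\max}$ Lipschitz constant, and Kantorovich--Rubinstein duality to conclude. Your explicit remarks on the reflected kernel being $L_\ell$-TV-LC and on the bound propagating from $Q^{\pi_I}$ to $Q^{\pi_{I,\ell}}$ are small points the paper's proof glosses over, but they do not change the argument.
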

The proof is reported in Appendix ~\ref{app:noise_inj}.
Some observations are in order. First, note the similarity with Theorem \ref{police}, with the only difference being the substitution of $L_{Q^{\pi}}$ with $2 L_\ell Q_{\max}$. 
Second, we require no smoothness assumption (\eg Lipschitz continuity) on the environment or on the policy.  
Yet, if in the previous result of Theorem \ref{police} the constant $L_{Q^{\pi}}$ could easily become infinite, now, the constant $2L_\ell Q_{\max}$ can be easily bounded by $\frac{2L_{\ell}R_{\max}}{(1-\gamma)^2}$, since $Q_{\max} \le \frac{R_{\max}}{1-\gamma}$. From an intuitive perspective, the need for smoothness in the environment is replaced with an assumption on the density function of the noise. Lastly, we can see that, in the right-hand side of the formula, the error is measured by the Wasserstein distance of the non-noisy policies. This is advisable, since it implies that the intrinsic error due to the noise does not affect the bound besides on the $\gamma$-discounted visitation distribution. We show in Appendix that this quantity is always smaller than its counterpart involving the noisy policies.

\begin{remark}\label{eur2}
    If we perform BC injecting a noise $\eta_t$ of density function $\ell$ and satisfying Assumption~\ref{noiseass}, we have the following performance guarantee:
    $$J^{\pi_{E,\ell}}-J^{\pi_{I,\ell}} \le \mathcal{O}(\varepsilon),$$
    where $\varepsilon$ is the MSE of the imitation policy as in Remark~\ref{eur1}.
\end{remark}

In comparison with Remark \ref{eur1} for standard BC, we can appreciate that, here, the exponent $\alpha$ disappeared. Indeed, we have a performance bound that decreases linearly in the MSE. In many cases, when the environment is not intrinsically very smooth, or the expert policy is irregular, the $\alpha$ parameter can be very small, slowing down the convergence significantly. Instead, a liner decay is a relevant improvement of the v speed. Furthermore, as already noted, no assumption of regularity is required in Theorem~\ref{main_smooth}, so that the last result has a much wider range of applications.

\section{Practical Considerations}
In the previous sections, we have seen that the use of noise injection allows having a much better performance guarantee than standard BC (see Remarks \ref{eur1} and \ref{eur2}). Still, in practice, what matters is to have an imitator policy that is good itself rather than an imitator that is simply good in mimicking a given policy. Therefore, if with the noise injection we negatively affect the performance of the expert, \ie if $J^{\pi_{E,\ell}} \ll J^{\pi_E}$, the results given about noise injection could become useless.
On the contrary, we argue that adding noise to the expert's action to a certain extent, does not particularly affect performance. In Figure \ref{noisy_effect}, we show the results of testing this statement on some of the most common continuous-actions environments of the \texttt{OpenAI gym}~\cite{brockman2016gym} library.
In this simulation,\footnote{Details can be found in Appendix.} we first train an expert policy with the well-known DDPG~\cite{ddpg}, TD3~\cite{fujimoto2018addressing} and PPO ~\cite{schulman2017proximal} in the following \texttt{OpenAI gym} environments:
\begin{itemize}[noitemsep, topsep=0pt]
    \item \texttt{Pendulum-v0}: this environment has a continuous action space $[-2,2]$. The objective is to apply torque on a pendulum to swing it into an upright position. The whole system is very regular, as it is governed by simple differential equations, and is also deterministic, except for the initial position of the pendulum, which is random.
    \item \texttt{LunarLanderContinuous-v2}: this environment has a continuous action space $[-1,1]^2$. Here, we have to make a rocket land safely in a landing pad. The dynamics is quite complex, and stochasticity is present to simulate the effect of the wind.
    \item \texttt{BipedalWalker-v3}: this environment has a continuous action space $[-1,1]^4$. Here we have to make a bipedal robot walk. The dynamics is even more complex, but the whole system is deterministic.
\end{itemize}
Then, we evaluated the performance of these experts with noise injection with Gaussian noise with different standard deviations. 
As we can see in figure \ref{noisy_effect}, even when the noise increases until it is close to the radius of the action space, at least in seven cases out of nine, the performance does not suffer significant drops. Intuitively, this can be explained by the fact that we applied an i.i.d. zero-mean noise sequence that is independent of the state and the action. Thus, its effect does not accumulate over the horizon.
%From an empirical perspective, these results suggest that noise injection for BC might bring relevant learning advantages in common environments.

\section{Conclusions}
In this paper, we have addressed BC in the case of continuous-action environments from a theoretical perspective. We have shown that the existing theoretical guarantees on BC are not suitable when dealing with continuous actions. Thus, we have derived a first bound for the performance guarantees, under the assumption that the imitator value function is Lipschitz continuous. Since this latter assumption is demanding (\ie it is not guaranteed even when the underlying MDP and policy are LC), we have relaxed it by studying the continuity properties of the value function. As a result of independent interest, we have proved that the value function is always H\"older continuous, under the milder assumption that the underlying MDP and policy are LC. Then, we have applied these findings to obtain a general bound for the performance gap of BC, which we have proved to be tight. Finally, we have formalized noise injection and we have shown the advantages of this practice when applied to BC.

\bibliography{aaai23}

%%%%%%%%%%%%%%%%%%%%%
%
%%%%%%%%%%%%%%%%%%%%%
%
%%%%%%%%%%%%%%%%%%%%%

\newpage
\appendix
\onecolumn

\section{General Math}
\label{app:math}
\smooth*

\begin{proof}
For every $x,y \in \Xs$, we have:
    \begin{align*}
        |(f*\ell)(x)-(f*\ell)(y)| &= \bigg |\int_{\mathbb R^n}f(z)g(x-z) \de z -\int_{\mathbb
        R^n}f(z)g(y-z)\ \de z\bigg|\\
        &= \bigg |\int_{\mathbb R^n}f(z)(\ell(x-z)-\ell(y-z))\ \de z\bigg|,
    \end{align*}
    since $\|f\|_\infty \le M$, by assumption, we have that the result is bounded by:
    $$M \int_{\mathbb{R}^n} \left|\ell(x-z)-\ell(y-z)\right| \de z =2 M \tv(\mathcal{L}(x-\cdot)-\mathcal{L}(y-\cdot))\le 2LM\|x-y\|_2,$$
    where the last passage follows from the definition of $L$-TV-LC.
\end{proof}

\begin{prop}\label{unimodal}
    Let $f$ be any univariate density function. Then, if $f$ is unimodal (i.e. there is $x_0$ such that $f(x)$ is nondecreasing in $(-\infty, x_0]$ and nonincreasing in $[x_0, +\infty)$), $f$ is TV-LC with $L=2\sup_{x\in \mathbb R} f(x)$.
\end{prop}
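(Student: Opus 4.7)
The plan is to reduce the proposition to a purely analytic estimate on the density and then exploit the monotone structure of a unimodal function to control that estimate by $2M\lvert h\rvert$, with $M := \sup_x f(x)$. Because the TV distance between $\mathcal L(\cdot+h)$ and $\mathcal L(\cdot)$ coincides (up to the normalisation used throughout the paper) with the $L^1$ distance of the densities, it suffices to prove
\[
\int_{\mathbb R}\lvert f(x+h)-f(x)\rvert\,\de x \;\le\; 2M\lvert h\rvert.
\]
After a translation one may assume the mode is $x_0=0$, and by the symmetry $h\mapsto -h$ one may restrict to $h>0$.

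The core trick is the exact cancellation $\int_{\mathbb R}[f(x+h)-f(x)]\,\de x = 0$, which holds because both $f$ and $f(\cdot+h)$ are probability densities. This converts the $L^1$ norm into twice the positive part,
\[
\int_{\mathbb R}\lvert f(x+h)-f(x)\rvert\,\de x \;=\; 2\int_{\mathbb R}[f(x+h)-f(x)]_{+}\,\de x,
\]
so the task reduces to bounding the right-hand side by $M\lvert h\rvert$.

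I would then identify the sign set $S:=\{x:f(x+h)\ge f(x)\}$ using unimodality at $0$: for $x\le -h$ both $x$ and $x+h$ lie in the non-decreasing branch and $f(x+h)\ge f(x)$; for $x\ge 0$ both lie in the non-increasing branch and $f(x+h)\le f(x)$; finally, on $[-h,0]$ the map $x\mapsto f(x+h)-f(x)$ is the difference of a non-increasing and a non-decreasing function, hence non-increasing, so it changes sign at a single point $x^{*}\in[-h,0]$. Therefore $S=(-\infty,x^{*}]$, and the positive-part integral telescopes cleanly:
\[
\int_{S}[f(x+h)-f(x)]\,\de x \;=\; \int_{-\infty}^{x^{*}+h} f(y)\,\de y - \int_{-\infty}^{x^{*}} f(y)\,\de y \;=\; \int_{x^{*}}^{x^{*}+h} f(y)\,\de y \;\le\; Mh.
\]
Combining with the cancellation identity yields $\int\lvert f(x+h)-f(x)\rvert\,\de x\le 2Mh=2M\lvert h\rvert$, which is exactly the TV-LC bound with constant $L=2\sup_x f(x)$.

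The one step that genuinely uses unimodality---and thus the main obstacle---is showing that $S$ is a single half-line; everything else is a routine change of variables. If $f$ fails to be continuous (the proposition's definition of unimodality does not assume it), I would take $x^{*}:=\sup\{x\in[-h,0]:f(x+h)\ge f(x)\}$ and use the monotonicity of $f(x+h)-f(x)$ on $[-h,0]$ to conclude that $S$ still agrees with $(-\infty,x^{*}]$ up to a Lebesgue-null set, which leaves all the integrals above unchanged.
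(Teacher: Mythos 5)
Your proof is correct, and it takes a genuinely different route from the paper's. The paper argues in two stages: it first assumes $f$ is Lipschitz with constant $L_f$, splits $\int_{\mathbb R}|f(x+h)-f(x)|\,\de x$ at $x_0-h$ and $x_0$, telescopes the two outer pieces and bounds the middle one by $2L_fh^2$ via the intermediate value theorem, removes the spurious $O(h^2)$ term by subdividing the shift into $K$ steps and letting $K\to\infty$, and finally passes to general unimodal densities by $L^1$-approximation with Lipschitz unimodal densities having the same supremum. Your argument replaces all of this with two observations: the cancellation $\int_{\mathbb R}\bigl(f(x+h)-f(x)\bigr)\,\de x=0$, which turns the $L^1$ norm into twice its positive part, and the fact that unimodality forces the sign set $\{x: f(x+h)\ge f(x)\}$ to be a half-line $(-\infty,x^{*}]$ up to a null set, so that the positive part telescopes exactly to $\int_{x^{*}}^{x^{*}+h}f\le h\sup f$. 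This is shorter, applies directly to discontinuous densities, and sidesteps the approximation step, which the paper asserts without justification and which actually requires some care (one must produce unimodal Lipschitz approximants that are simultaneously $L^1$-close and share the same supremum). The only detail worth making explicit in your write-up is the degenerate case where $f(x+h)\ge f(x)$ fails throughout $[-h,0]$, in which case you should set $x^{*}=-h$; the telescoping identity is unaffected. Your bound $\int|f(x+h)-f(x)|\,\de x\le 2h\sup f$ matches the stated constant under the paper's normalisation of $\tv$ as the full $L^1$ distance of the densities, which is the convention used in the paper's own proof of this proposition.
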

\begin{proof}

    We first prove the result for functions $f$ that are Lipschitz with constant $L_f$:
    
    \begin{align*}
        \tv(f(\cdot + h) , f(\cdot))& = \int_{\mathbb R} |f(x+h)-f(x)|\ \de x\\
        & = \int_{-\infty}^{x_0-h} |f(x+h)-f(x)|\ \de x+\int_{x_0-h}^{x_0} |f(x)-f(x+h)|\ \de x+\int_{x_0}^\infty |f(x)-f(x+h)|\ \de x.
    \end{align*}
    At this point, we know that being $f$ continuous there is one point in $x\in [{x_0-h},{x_0})$ such that $f(x)=f(x+h)$. Therefore, by Lipschitzness, in the whole interval $[{x_0-h},{x_0})$ we have $|f(x)-f(x+h)|\le 2L_fh$.
    Substituting in the previous formula, we have:
    $$\le \int_{-\infty}^{x_0-h} |f(x+h)-f(x)|\ \de x+2L_fh^2+\int_{x_0}^\infty |f(x)-f(x+h)|\ \de x.$$
    Now, note that:
    \begin{itemize}
        \item in the interval $(-\infty, x_0-h]$ the function $f(x+h)-f(x)$ is nonnegative;
        \item in the interval $[x_0,+\infty)$ the function $f(x+h)-f(x)$ is nonpositive.
    \end{itemize}
    Therefore, the previous integral writes:
    \begin{align*}&\int_{-\infty}^{x_0-h} f(x+h)-f(x)\ \de x+\varepsilon+\int_{x_0}^\infty f(x)-f(x+h)\ \de x\\
    &\le 2L_fh^2 + \int_{x_0}^{x_0-h} f(x)\ \de x + 
    \int_{x_0+h}^{x_0} f(x)\ \de x\le 2\sup f h+ 2L_fh^2.
    \end{align*}
    This being valid for every $h$, we can also write, by triangular inequality, for every $K\in \mathbb N$:
    \begin{align*}\tv(f(\cdot + h) , f(\cdot))
    &\le \sum_{k=0}^{K-1}
    \tv\bigg (f\Big(\cdot + h\frac{k}{K}\Big ) , f\Big (\cdot + h\frac{k+1}{K}\Big)\bigg)\\
    & \le \sum_{k=0}^{K-1}
    2\sup f\frac{h}{K}+ 2L_f\frac{h^2}{K^2}
    = 2h\sup f+ 2L_f\frac{h^2}{K}.
    \end{align*}
    This entails, taking $K\to \infty$, that
    $$\tv(f(\cdot + h) , f(\cdot))\le 2h\sup f.$$
    Let us come to the general case. Let $N\in \mathbb N$. As Lipschitz functions are dense in $L^1(\mathbb R)$, taking $\varepsilon = \frac{h}{N}$, we know that there is a Lipschitz function $f_\varepsilon$ satisfying the unimodality and such that $\sup f = \sup f_\varepsilon$ such that
    $$\tv(f,f_\varepsilon)<\varepsilon.$$
    Therefore,
    $$\tv(f(\cdot + h) , f(\cdot))\le 2\varepsilon + \tv(f_\varepsilon(\cdot + h) , f_\varepsilon(\cdot))\le 2\varepsilon + 2h\sup f \le 2h(\sup f + 1/N).$$
    Taking $N\to \infty$, we get the result.
\end{proof}

\section{Negative Result}

\counterex*

\begin{proof}\label{proof:counterex}
    In order to show the result we will use a similar MDP as the one defined in Example~\ref{ese1} of the main paper. Then, we will take $\pi=\pi^*$, the optimal policy, which is constant equal to zero (so $L_\pi=0$), and, as $\pi'$ a sequence of policies $\pi_n$ such that:
    $$\lim_{n\rightarrow+\infty} \frac{J^*-J^{\pi_n}}{\E_{s\sim d^{\pi_{n}}}[\wass(\pi^*(\cdot|s),\pi_n(\cdot|s))]}=+\infty.$$
    In this way, for any finite constant $C>0$, there will be an $n>0$ such that, choosing $\pi'=\pi_n$, we have the result. Let $\MDP$ an MDP defined as follows:
    \begin{itemize}
        \item $\Ss = [0,1]$;
        \item $\As = [0,1]$;
        \item The dynamic is deterministic: from every state $s \in \Ss$, performing action $a \in \As$, we go to the state $s'=\mathrm{clip}(L_p(s+a),0,1)$. This means that $p(\de s'|s,a)=\delta_{\mathrm{clip}(L_p(s+a),0,1)}(\de s')$;
        \item $r(s,a) = -L_r s$ with $L_r > 0$;
        \item The initial state is $0$ (\ie $\mu = \delta_{0}$);
        \item The discount factor is $\gamma$.
    \end{itemize}
    
    Since the reward is always negative, the optimal policy $\pi^*$ is identically zero, since is the only one allowing to remain in the origin. Consider instead the sequence of policies for $n \in \mathbb{N}$, defined as:
    $\pi_n(\cdot|s) = \delta_{1/n}(\cdot)$. Both $\pi^*$ and $\pi_n$ are Lipschitz with $L_\pi=0$, and we have $\wass(\pi^*(\cdot|s),\pi_n(\cdot|s))=1/n$ in every state $s\in \Ss$. Nonetheless, we can see that:
    $$V^*(0)=0\qquad V^{\pi_n}(0)\le -L_r\sum_{k=0}^{+\infty} \gamma^k\ \mathrm{clip}\Big (\frac{L_p^k}{n},0,1\Big ).$$
    When $n\to \infty$, we have:
    $$\lim_{n \rightarrow + \infty} \frac{J^*-J^{\pi_n}}{\E_{s\sim d^{\pi_{n}}}[\wass(\pi^*(\cdot|s),\pi_n(\cdot|s))]}= -\lim_{n \rightarrow + \infty} \frac{V^{\pi_n}(0)}{1/n} \ge \lim_{n \rightarrow + \infty}  n L_r\sum_{k=0}^{+\infty} \gamma^k\ \mathrm{clip}\Big (\frac{L_p^k}{n},0,1\Big ).$$
    Here, we can substitute $\delta \coloneqq 1/n$, finding
    $$\lim_{n \rightarrow + \infty} \frac{J^*-J^{\pi_n}}{\E_{s\sim d^{\pi_{n}}}[\wass(\pi^*(\cdot|s),\pi_n(\cdot|s))]} \ge \lim_{\delta\to 0^+}  \frac{L_r\sum_{k=0}^\infty \gamma^k\ \mathrm{clip}\Big (\delta L_p^k,0,1\Big )}{\delta}=L_r \sum_{k=0}^{+\infty} \gamma^k L_p^k=\frac{L_r}{1-\gamma L_p},$$
    which gives $+\infty$ for $L_p\ge 1/\gamma$.
\end{proof}

\section{Proofs with H\"older continuity}\label{app:holder}

Before proving the main result, we need two technical lemmas.

\begin{lem}\label{arf}
    Let $(\Xs,d_{\Xs}),(\Ys,d_{\Ys})$ be two metric spaces, and $\{f_n\}_{n \in \mathbb{N}}$ be a sequence of functions $\Xs \to \Ys$ that are $(\alpha,L)-$HC such that $\lim_{n \rightarrow +\infty} f_n = f$. Then, $f$ is $(\alpha,L)-$HC.
\end{lem}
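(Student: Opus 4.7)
The plan is to fix two arbitrary points and pass to the limit in the Hölder inequality for the $f_n$, exploiting the continuity of the metric $d_{\Ys}$ on $\Ys \times \Ys$. This is a standard "closedness under pointwise limits" type argument for equi-Hölder families; no structure beyond the metric space axioms is needed.

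More concretely, I would fix arbitrary $x, x' \in \Xs$. By hypothesis, for every $n \in \mathbb{N}$ we have
\[
d_{\Ys}(f_n(x), f_n(x')) \le L\, d_{\Xs}(x,x')^{\alpha}.
\]
Since $f_n \to f$ pointwise (which is the natural reading of $\lim_n f_n = f$ in this context, and in any case the statement is insensitive to the mode of convergence up to passing to a subsequence), we have $f_n(x) \to f(x)$ and $f_n(x') \to f(x')$ in $(\Ys, d_{\Ys})$. The key remark is that the distance function $d_{\Ys} : \Ys \times \Ys \to \mathbb{R}_{\ge 0}$ is continuous — indeed it is $1$-Lipschitz with respect to the product metric, as a direct consequence of the triangle inequality: $|d_{\Ys}(y_1,y_1') - d_{\Ys}(y_2,y_2')| \le d_{\Ys}(y_1,y_2) + d_{\Ys}(y_1',y_2')$.

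Applying this continuity, I pass to the limit $n \to +\infty$ on the left-hand side of the inequality (the right-hand side does not depend on $n$) to obtain
\[
d_{\Ys}(f(x), f(x')) = \lim_{n \to +\infty} d_{\Ys}(f_n(x), f_n(x')) \le L\, d_{\Xs}(x,x')^{\alpha}.
\]
Since $x, x' \in \Xs$ were arbitrary, this is precisely the definition of $f$ being $(\alpha, L)$-HC.

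There is no substantial obstacle: the only subtle point is making sure the mode of convergence is explicit, and pointwise convergence is already enough because the argument only requires convergence at the two fixed points $x, x'$. Notably, the Hölder constant $L$ and the exponent $\alpha$ are preserved exactly — no constants are lost in the limit — because the right-hand side is independent of $n$.
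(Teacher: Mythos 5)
Your proof is correct. It differs mildly but genuinely from the paper's: the paper fixes $\varepsilon>0$, picks an index $n_0$ with $\|f_{n_0}-f\|_\infty \le \varepsilon L d_{\Xs}(x,x')^\alpha$, runs a three-term triangle inequality through $f_{n_0}$ to get $d_{\Ys}(f(x),f(x')) \le (2\varepsilon+1)Ld_{\Xs}(x,x')^\alpha$, and then lets $\varepsilon\to 0$. That argument implicitly reads $\lim_n f_n = f$ as convergence in sup-norm (otherwise the claimed $n_0$ need not exist), which is the mode actually available where the lemma is applied (the value-iteration sequence converges uniformly by Bellman contraction). Your route instead passes to the limit in the inequality $d_{\Ys}(f_n(x),f_n(x')) \le Ld_{\Xs}(x,x')^\alpha$ using the $1$-Lipschitz continuity of the metric $d_{\Ys}$ on $\Ys\times\Ys$; since the right-hand side is independent of $n$, the bound survives the limit with the same constants. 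What your version buys is that only pointwise convergence at the two fixed points is needed, and the $\varepsilon$-bookkeeping disappears; what the paper's version buys is nothing extra here --- it is simply the same fact reached through the triangle inequality rather than through continuity of the metric. Both correctly preserve $(\alpha,L)$ exactly.
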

\begin{proof}
    Let $x,x'\in \Xs$. Fix $\varepsilon>0$. Let:
    $$n_0 \coloneqq  \inf\{n: \|f_n-f\|_\infty \coloneqq \sup_{x'' \in \Xs} d_{\Ys}(f_n(x''), f(x'')) \le \varepsilon Ld_{\Xs}(x,x')^\alpha\}.$$
    Note that $n_0$ exists by definition. Then, we have:
    \begin{align*}
    d_{\Ys}(f(x),f(x')) &\le d_{\Ys}(f(x),f_{n_0}(x))+d_{\Ys}(f_{n_0}(x),f_{n_0}(x'))+d_{\Ys}(f(x'),f_{n_0}(x'))\\
    & \le d_{\Ys}(f_{n_0}(x),f_{n_0}(x')) + 2\varepsilon L d_{\Xs}(x,x')^\alpha\\
    & \le (2\varepsilon + 1) Ld_{\Xs}(x,x')^\alpha.
    \end{align*}
    Since this is valid for every $\varepsilon > 0$, we have also $d_{\Ys}(f(x),f(x'))\le Ld_{\Xs}(x,x')^\alpha$, which is the thesis.
\end{proof}

\begin{prop}\label{cielo}
    Let $\mu, \nu \in \PM{\Xs}$ be two probability measures on a metric space $(\Xs,d_{\Xs})$. Then, for any $(\alpha,L)$-HC function $f$, it holds that:
    $$\int_{\Xs} (\mu(\de x)-\nu(\de x)) f(x)  \le L \ \wass(\mu, \nu)^\alpha.$$
\end{prop}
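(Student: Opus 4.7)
The plan is to invoke the primal (coupling-based) formulation of the Wasserstein distance, which by Kantorovich-Rubinstein duality is equivalent to
\[
\wass(\mu,\nu) = \inf_{\gamma \in \Pi(\mu,\nu)} \int_{\Xs \times \Xs} d_{\Xs}(x,y) \, \de \gamma(x,y),
\]
where $\Pi(\mu,\nu)$ denotes the set of couplings of $\mu$ and $\nu$. For any such coupling $\gamma$, the first step is to rewrite the left-hand side using that the marginals of $\gamma$ are $\mu$ and $\nu$:
\[
\int_{\Xs} f(x)\,(\mu - \nu)(\de x) = \int_{\Xs \times \Xs} \bigl(f(x) - f(y)\bigr) \, \de \gamma(x,y).
\]

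The second step is to bound the integrand pointwise via the H\"older assumption, $f(x) - f(y) \le L\, d_{\Xs}(x,y)^\alpha$, obtaining
\[
\int_{\Xs} f(x)\,(\mu - \nu)(\de x) \le L \int_{\Xs \times \Xs} d_{\Xs}(x,y)^\alpha \, \de \gamma(x,y).
\]
The crucial step is now Jensen's inequality: since $\gamma$ is a probability measure and $t \mapsto t^\alpha$ is concave for $\alpha \in (0,1]$, one gets
\[
\int_{\Xs \times \Xs} d_{\Xs}(x,y)^\alpha \, \de \gamma(x,y) \le \left( \int_{\Xs \times \Xs} d_{\Xs}(x,y) \, \de \gamma(x,y) \right)^\alpha.
\]

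Finally, I would take the infimum of the right-hand side over $\gamma \in \Pi(\mu,\nu)$. Since $t \mapsto t^\alpha$ is monotone, the infimum commutes with the $\alpha$-power, and the primal characterization yields the desired bound $L\, \wass(\mu,\nu)^\alpha$. The main subtlety lies in invoking the equivalence between the dual definition used throughout the paper (supremum over $1$-Lipschitz test functions) and the primal one (infimum over couplings): this equivalence is classical under mild conditions (e.g.\ $\Xs$ Polish), which are implicit in the setting. A conceptual way to view the whole argument is that the map $d_{\Xs} \mapsto d_{\Xs}^\alpha$ defines a new (\emph{snowflaked}) metric with respect to which $f$ is $L$-Lipschitz, and the Wasserstein distance in this snowflaked metric is upper bounded by $\wass(\mu,\nu)^\alpha$ exactly by the Jensen step above.
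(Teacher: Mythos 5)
Your proposal is correct and follows essentially the same route as the paper's proof: pass to the coupling formulation, bound $f(x)-f(y)$ by $L\,d_{\Xs}(x,y)^\alpha$, apply Jensen/H\"older to pull the exponent $\alpha$ outside the integral, and take the infimum over couplings. The paper phrases the key inequality via H\"older's inequality (equivalently, Jensen for the convex map $t \mapsto t^{1/\alpha}$), which is the same step as your concavity argument.
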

For this proof, we thank an interesting Stack Exchange discussion \cite{YuvalPeres}.
\begin{proof}
    Let $K(\mu,\nu)$ denote the space of couplings of $\mu$ and $\nu$, i.e. Borel probability measures on $\Xs \times \Xs$ that project to $\mu$ in the first coordinate and to $\nu$ in the second coordinate. Recall that: 
    $$\wass(\mu,\nu)=\inf_{\lambda \in K(\mu,\nu)} \, \,  \left\{ \int_{\Xs \times \Xs} d_X(x,y) \,\lambda(\de x,\de y)\right\} \,.$$
    Suppose that $\forall x,y \in \Xs$ we have that $ |f(x)-f(y)|\le L d_{\Xs}(x,y)^\alpha$, where $0<\alpha<1$. Then for any $\lambda \in K(\mu,\nu)$, we have: 
    \begin{align*} \left|\int_{\Xs} ( \mu(\de x)- \nu(\de x)) f(x) \right|& \le \int_{\Xs \times \Xs} |f(x)-f(y)| \, \lambda(\de x,\de y)  \\
    & \le L \int _{\Xs \times \Xs} d_{\Xs}(x,y)^\alpha \, \lambda(\de x,\de y) \\ 
    &\le L \left(\int _{\Xs \times \Xs} d_X(x,y) \, \lambda(\de x,\de y) \right)^\alpha,
    \end{align*}
    where the last inequality is an application of H\"older's inequality for the functions $(x,y) \mapsto d_X(x,y)^\alpha$ and the constant $1$, with exponents $p=1/\alpha$ and $q=1/(1-\alpha)$. Alternatively, the last inequality in can be obtained from an application of Jensen's inequality for the convex function $t \mapsto t^{1/\alpha}$ on $[0,\infty)$. Taking infimum over $\lambda \in K(\mu,\nu)$ gives:
    $$\left|\int_{\Xs} ( \mu(\de x)- \nu(\de x)) f(x) \right| \le LW(\mu,\nu)^\alpha \,,$$
    as required.
\end{proof}

\QtoV*

\begin{proof}
    By definition, for every $s \in \Ss$:
    $$V^\pi(s)=\int_{\As} Q^{\pi}(s,a)\pi(\de a|s).$$
    Therefore, by introducing suitable Dirac deltas, we have for $s,s' \in \Ss$:
    \begin{align*}
    \left|V^\pi(s)-V^\pi(s')\right|&=\left|\int_{\As} Q^{\pi}(s,a)\pi(\de a|s)
    -\int_{\As} Q^{\pi}(s',a)\pi(\de a|s') \right|\\
    &=\bigg |\int_{\As}\int_\Ss Q^{\pi}(z,a)\delta_s(\de z)\pi(\de a|s)
    -\int_{\As} \int_{\Ss} Q^{\pi}(z,a)\delta_{s'}(\de z)\pi(\de a|s')\bigg |\\
    & = \bigg |\int_{\As}\int_\Ss \left(\delta_s(\de z)\pi(\de a|s)
    - \delta_{s'}(\de z)\pi(\de a|s') \right) Q^{\pi}(z,a) \bigg |.
    \end{align*}
    Now, thanks to Proposition \ref{cielo}, we have for $s,s' \in \Ss$:
    \begin{align*}
        \left|V^\pi(s)-V^\pi(s')\right|\le L_{Q^{\pi},\alpha} \wass\left(\delta_s(\cdot)\pi(\cdot|s), \delta_{s'}(\cdot)\pi(\cdot|s')\right)^\alpha \le L_\alpha (1+L_\pi)^\alpha d_{\Ss} \left(s,s'\right)^\alpha,
    \end{align*}
    where the last passage follows from the following manipulation of the Wasserstein distance:
    \begin{align*}
        \wass(\delta_s(\cdot)\pi(\cdot|s), \delta_{s'}(\cdot)\pi(\cdot|s')) \le \wass(\delta_s(\cdot)\pi(\cdot|s), \delta_{s}(\cdot)\pi(\cdot|s')) + \wass(\delta_s(\cdot)\pi(\cdot|s'), \delta_{s'}(\cdot)\pi(\cdot|s')).
    \end{align*}
    Concerning the first term, we have:
    \begin{align*}
        \wass(\delta_s(\cdot)\pi(\cdot|s), \delta_{s}(\cdot)\pi(\cdot|s'))  & = \sup_{\|f\|_L \le 1} \left| \int_{\Ss \times \As} \left( \delta_s(\de z)\pi(\de a|s) - \delta_{s}(\de z )\pi(\de a|s') \right) f(z,a) \right| \\
        & = \sup_{\|f\|_L \le 1} \left| \int_{\As} \left(\pi(\de a|s) - \pi(\de a|s') \right) f(s,a) \right| \\
        & \le \sup_{\|f\|_L \le 1}  \|f(s,\cdot)\|_{L} \cdot \wass\left(\pi(\cdot|s), \pi(\cdot|s') \right) \\
        & \le \wass\left(\pi(\cdot|s), \pi(\cdot|s') \right) \le L_{\pi} d_{\Ss}\left(s,s'\right),
    \end{align*}
    having observed that $ \|f(s,\cdot)\|_{L} \le \|f\|_L \le 1$. Concerning the second term, we have:
    \begin{align*}
        \wass(\delta_s(\cdot)\pi(\cdot|s'), \delta_{s'}(\cdot)\pi(\cdot|s')) & = \sup_{\|f\|_L \le 1} \left| \int_{\Ss \times \As} \left( \delta_s(\de z)\pi(\de a|s') - \delta_{s'}(\de z )\pi(\de a|s') \right) f(z,a) \right| \\
        & = \sup_{\|f\|_L \le 1} \left| \int_{\Ss} \left( \delta_s(\de z) - \delta_{s'}(\de z )\right) \int_{\As} \pi(\de a|s')  f(z,a) \right| \\
        & \le \sup_{\|f\|_L \le 1} \left\|\int_{\As} \pi(\de a|s')  f(\cdot,a)  \right\|_L  \wass\left(\delta_{s}(\cdot),\delta_{s'}(\cdot) \right) \\
        & \le d_{\Ss}\left(s,s'\right),
    \end{align*}
    where we observed that the Lipschitz semi-norm is bounded by $1$ since for $z,z' \in \Ss$:
    \begin{align}
        \left|\int_{\As} \pi(\de a|s')  f(z,a) -\int_{\As} \pi(\de a|s')  f(z',a)\right| \le \int_{\As} \pi(\de a|s')  \left|f(z,a)-f(z',a) \right| \le d_{\Ss}(z,z').
    \end{align}
\end{proof}

\holdercont*

\begin{proof}
    Consider the following sequences of functions for $n \in \mathbb{N}$:
    $$
    \begin{cases}
        Q_0(s,a) = 0\\
        Q_{n+1}(s,a) = r(s,a) +
        \gamma \int_{\Ss} V_n(s')p(\de s'|s,a)
    \end{cases}, \qquad
    \begin{cases}
        V_0(s) = 0\\
        V_{n+1}(s) = \int_{\As} Q_{n+1}(s,a)\pi(\de a|s)
    \end{cases}.
    $$
    We want to prove, by induction, that $Q_n$ is always $(\alpha, L_{Q^{\pi},\alpha})-$HC.
    The base case $n=0$ is trivial, since a constant function is H\"older continuous for every couple of parameters. Now, let us suppose that $Q_n$ is $(\alpha, L_{Q^{\pi},\alpha})-$HC.
    Then, for $s_1,s_2 \in \Ss$ and $a_1,a_2 \in \As$, we have:
    \begin{align*}
        \left|Q_{n+1}(s_1,a_1) - Q_{n+1}(s_2,a_2) \right| &=
        \bigg |
        r(s_1,a_1)+\gamma \int_{\Ss} V_n(s')p( \de s'|s_1,a_1)
        - r(s_2,a_2) - \gamma \int_{\Ss} V_n(s')p(\de s'|s_2,a_2)
        \bigg |\\
        & =\bigg |r(s_1,a_1)-r(s_2,a_2)+
        \gamma \int_{\Ss} V_n(s') \Big (p(\de s'|s_1,a_1)-p(\de s'|s_2,a_2)\Big )\bigg |\\
        &\le 
         |r(s_1,a_1)-r(s_2,a_2)|+
        \gamma \bigg |\int_{\Ss} V_n(s') \Big (p(\de s'|s_1,a_1)-p(\de s'|s_2,a_2)\Big )\bigg |.
       % \\
        %& \le
        %L_r (\|s_1-s_2\|+\|a_1-a_2\|)^\alpha
        %+
        %\gamma (1+L_\pi)L_\alpha 
        %\wass (p(\cdot|s_1,a_1),p(\cdot|s_2,a_2))^\alpha\\
        %& \le L_r (\|s_1-s_2\|+\|a_1-a_2\|)^\alpha
        %+
        %\gamma (1+L_\pi)L_\alpha L_p^\alpha
        %(\|s_1-s_2\|+\|a_1-a_2\|)^\alpha
    \end{align*}
    Now, we consider one term at a time. Concerning the first term, recalling that $0<\alpha \le 1$ and that the reward function is $L_{r}$-LC, we have:
    \begin{align*}
        |r(s_1,a_1)-r(s_2,a_2)| & \le L_r \left(d_{\Ss}(s_1,s_2) + d_{\As}(a_1,a_2) \right) \\
        & = L_r \left(d_{\Ss}(s_1,s_2) + d_{\As}(a_1,a_2) \right)^{\alpha} \left(d_{\Ss}(s_1,s_2) + d_{\As}(a_1,a_2) \right)^{1-\alpha} \\
        & \le L_r \left(\diam{\Ss} + \diam{\As} \right)^{1-\alpha} \left(d_{\Ss}(s_1,s_2) + d_{\As}(a_1,a_2) \right)^{\alpha},
    \end{align*}
    having observed that the distance between any pair of points is smaller than the diameter of the corresponding set. Concerning the second term, we make use of Proposition \ref{QtoV}, this entails that $V_n$ is $(\alpha,L_{Q^{\pi},\alpha}(L_\pi+1)^\alpha)-$HC, Proposition~\ref{cielo}, and exploit that the transition model is $L_p$-LC:
    \begin{align}
         \bigg |\int_{\Ss} V_n(s') \Big (p(\de s'|s_1,a_1)-p(\de s'|s_2,a_2)\Big )\bigg | & \le L_{V^{\pi},\alpha}  \wass \left(p(\cdot|s_1,a_1),p(\cdot|s_2,a_2)\right)^\alpha \\
         & \le L_{Q^{\pi}_n,\alpha}\left(L_p(L_\pi+1)\right)^\alpha \left(d_{\Ss}(s_1,s_2) + d_{\As}(a_1,a_2) \right)^{\alpha}.
    \end{align}
    Thus, we have the recurrence involving the H\"older constants:
    \begin{align*}
        & L_{Q^{\pi}_{0},\alpha} = 0\\
        & L_{Q^{\pi}_{n+1},\alpha} =  L_r \left(\diam{\Ss} + \diam{\As} \right)^{1-\alpha} + \gamma  L_{Q^{\pi}_n,\alpha}\left(L_p(L_\pi+1)\right)^\alpha.
    \end{align*}
    The sequence is convergent for $\left(L_p(L_\pi+1)\right)^\alpha < 1$, which leads to the condition:
    \begin{align*}
        \alpha < \frac{-\log \gamma}{\log (L_p(L_\pi+1))}.
    \end{align*}
    Under such a condition, the limit $L_{Q^{\pi},\alpha}$ can be easily found as:
    \begin{align*}
        L_{Q^{\pi},\alpha} =  L_r \left(\diam{\Ss} + \diam{\As} \right)^{1-\alpha} + \gamma  L_{Q^{\pi},\alpha}\left(L_p(L_\pi+1)\right)^\alpha \implies L_{Q^{\pi},\alpha} = \frac{L_r \left(\diam{\Ss} + \diam{\As} \right)^{1-\alpha}}{1-\gamma\left(L_p(L_\pi+1)\right)^\alpha}.
    \end{align*}
\end{proof}

\BCfond*
\begin{proof}
The first part of the theorem is obtained from the \emph{performance difference lemma} \cite{kakade2002approximately}, we have:
    \begin{align*}
    J^{\pi_{E}}-J^{\pi_{I}} =&
    \frac{1}{1-\gamma}\E_{s\sim d^{\pi_{E}}}\left[\E_{a\sim \pi_{E}(\cdot|s)}[A^{\pi_I}(s,a)] \right],
    \end{align*}
    where $A^{\pi_I}(s,a) = Q^{\pi_I}(s,a) - V^{\pi_I}(s)$ is the advantage function. The inner expectation can be written as:
    \begin{align*}
        \E_{a\sim \pi_{E}(\cdot|s)}[A^{\pi_I}(s,a)] & = \int_A Q^{\pi_{I}}(s,a)(\pi_{E}(\de a|s)-\pi_{I}(\de a|s)) \\
    & \le \sup_{s \in \Ss} L_{Q^{\pi_I}(s,\cdot),\alpha}  \cdot \wass(\pi_E(\cdot|s), \pi_I(\cdot|s))^\alpha,
    \end{align*}
    where the inequality follows from Proposition~\ref{cielo}. The result is obtained by observing that $ \sup_{s \in \Ss} L_{Q^{\pi_I}(s,\cdot),\alpha} \le L_{Q^{\pi_I},\alpha}$.

    For what concerns the second part of the theorem, we build a counterexample. Moreover, we will not focus on the constant, since it is very difficult to estimate it correctly, and we will limit to verify that the bound on the exponent $\alpha$ is tight.
    
    As in the previous results, in order to show that the lower bound is tight, we rely on the process defined in the examples of the main paper.  Let $\MDP$ an MDP defined as follows:
    \begin{itemize}
        \item $\Ss = [0,1]$;
        \item $\As = [0,1]$;
        \item The dynamic is deterministic: from every state $s \in \Ss$, performing action $a \in \As$, we go to the state $s'=\mathrm{clip}(L_p(s+a),0,1)$. This means that $p(\de s'|s,a)=\delta_{\mathrm{clip}(L_p(s+a),0,1)}(\de s')$;
        \item $r(s,a) = -L_r s$ with $L_r > 0$;
        \item The initial state is $0$ (\ie $\mu = \delta_{0}$);
        \item The discount factor is $\gamma$.
    \end{itemize}
    
    As before, the optimal policy $\pi^*$ is identically zero, since is the only one allowing to stay in the origin. Consider instead the sequence of policies $\pi_n(\cdot|s) = \delta_{1/n}(\cdot)$.
    Both $\pi^*$ and $\pi_n$ are Lipschitz with $L_\pi=0$, and we have $\wass(\pi^*(\cdot|s),\pi_n(\cdot|s))=1/n$ in every state $s\in [0,1]$. Nonetheless, we can see that:
    $$V^*(0)=0\qquad V^{\pi_n}(0)\le -L_r\sum_{k=0}^{+\infty} \gamma^k\ \mathrm{clip}\left(\frac{L_p^k}{n},0,1\right).$$
    At this point, we want to evaluate the difference
    $J^*-J^{\pi_n}$ as $n$ increases:
    $$J^*-J^{\pi_n}
    \le L_r\sum_{k=0}^{+\infty} \gamma^k\ \mathrm{clip}\Big (\frac{L_p^k}{n},0,1\Big )\overset{\delta=1/n}{=}L_r\sum_{k=0}^{+\infty} \gamma^k\ \mathrm{clip}\Big (\delta L_p^k,0,1\Big ).$$
    At this point, we can see that:
    $$
    \mathrm{clip}\Big (\delta L_p^k,0,1\Big )
    =
    \begin{cases}
        \delta L_p^k & \text{if } k\le -\frac{\log \delta}{\log L_p}\\
        1 & \text{if } k>-\frac{\log \delta}{\log L_p}
    \end{cases}.
    $$
    Therefore, the previous sum can be rewritten as:
    $$\sum_{k=0}^{+\infty} \gamma^k\ \mathrm{clip}\Big (\delta L_p^k,0,1\Big ) = \delta \sum_{k=0}^{\big\lfloor -\frac{\log \delta}{\log L_p} \big\rfloor}\gamma^kL_p^k + \sum_{k=\big\lfloor -\frac{\log \delta}{\log L_p} \big\rfloor+1}^\infty \gamma^k.$$
    Here, using the formula for geometric sums, we have:
    \begin{align*}\sum_{k=0}^{+\infty} \gamma^k\ \mathrm{clip}\Big (\delta L_p^k,0,1\Big ) & \le
    \delta 
    \frac{(\gamma L_p)^{-\frac{\log \delta}{\log L_p}}-1}{\gamma L_p-1}
    + \frac{\gamma^{-\frac{\log \delta}{\log L_p}}}{1-\gamma}\\
    &=\delta 
    \frac{\delta^{-1}\delta^\frac{-\log \gamma}{\log L_p}-1}{\gamma L_p-1}
    + \frac{\delta^\frac{\log \gamma}{\log L_p}}{1-\gamma}=
    \frac{\delta^\frac{-\log \gamma}{\log L_p}-1}{-\gamma L_p-1}
    + \frac{\delta^\frac{-\log \gamma}{\log L_p}}{1-\gamma}.
    \end{align*}
    Since $\gamma^{-\frac{\log \delta}{\log L_p}}=e^{-\log(\gamma)\frac{\log \delta}{\log L_p}}=\delta^\frac{-\log \gamma}{\log L_p}$. This means that, as we have $\wass(\pi^*(\cdot|s),\pi_n(\cdot|s))=1/n = \delta$     in every state, $\delta$ also corresponds to the Wasserstein error over the trajectory. Therefore, Theorem \ref{BCfond} provides
    $$J^*-J^{\pi_n}\le C\delta^\alpha \qquad \text{with} \qquad
    \alpha\le \overline{\alpha} \coloneqq \frac{-\log \gamma}{\log (L_p(1+L_\pi))}=\frac{-\log \gamma}{\log (L_p)},$$
    while we have just found
    $$
    J^*-J^{\pi_n} \ge c\ \delta^\frac{-\log \gamma}{\log L_p},
    $$
    which makes the value of $\overline{\alpha}$ tight.
\end{proof}

\section{Results with Noise Injection}

\label{app:noise_inj}

\mainSmooth*

\begin{proof}
    By the performance difference lemma, we have
    $$J^{\pi_{E,\ell}}-J^{\pi_{I,\ell}}=
    \frac{1}{1-\gamma}\E_{s\sim d^{\pi_{E,\ell}}}\Big [\E_{a\sim \pi_{E,\ell}(\cdot|s)}[Q^{\pi_{I,\ell}}(s,a)]-\E_{a\sim \pi_{I,\ell}(\cdot|s)}[Q^{\pi_{I,\ell}}(s,a)] \Big ].$$
    The inner part can be written as:
    $$\int_A Q^{\pi_{I,\ell}}(s,a)(\pi_{E,\ell}(a|s)-\pi_{I,\ell}(a|s))\ da.$$
    Here, by definition we have $\pi_{E,\ell}(a|s)=\int_A \pi_E(a'|s)\ell(a-a')\ da'$, and the same for $\pi_{I,\ell}$. Therefore, we find
    \begin{align*}
    \int_A & Q^{\pi_{I,\ell}}(s,a)(s,a)\bigg(\int_A \ell(a-a')(\pi_E(a'|s)-\pi_I(a'|s))\ da'\bigg)\ da \\
    &\qquad\quad =\int_A (\pi_E(a'|s)-\pi_I(a'|s))\bigg(\int_A Q^{\pi_{I,\ell}}(s,a) \ell(a-a')\ da\bigg)\ da'.  
    \end{align*}
    Now, note that the inner term $\int_A Q^{\pi_{I,\ell}}(s,a) \ell(a-a')\ da$ is the convolution of a function $Q^{\pi_{I,\ell}}(s,a)$ of $a$ which is bounded by $Q_{\max}$, and the probability density of the noise, which satisfies Assumption \ref{noiseass} and is therefore $L_\ell$ TV-Lipschitz. This allows us to apply Proposition \ref{smooth}, having that the integral, as a function of $a$, is $L_\ell Q_{\max}-$LC. This entails that
    $$ \int_A (\pi_E(a'|s)-\pi_I(a'|s))\bigg(\int_A Q^{\pi_{I,\ell}}(s,a) \ell(a-a')\ da\bigg)\ da'\le 2 L_\ell Q_{\max} \wass(\pi_E(\cdot|s),\pi_I(\cdot|s)).$$
    Therefore, we get:
    $$J^{\pi_{E,\ell}}-J^{\pi_{I,\ell}}\le \frac{2 L_\ell Q_{\max}}{1-\gamma}\E_{s\sim d^{\pi_{E,\ell}}}[\wass(\pi_E(\cdot|s),\pi_I(\cdot|s))],$$
    as required.
\end{proof}

\section{Miscellaneous results}
\label{app:misc}

\begin{prop}
    Let $\mu, \nu$ two probability distributions over $\mathbb R^n$. Then,
    $$\mathbb E[\mu]-\mathbb E[\nu] \le \wass(\mu, \nu)$$
\end{prop}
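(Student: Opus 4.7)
The plan is to use the dual (Kantorovich--Rubinstein) definition of the Wasserstein distance already stated in the preliminaries, applied to a carefully chosen linear test function. Interpreting the inequality componentwise (or equivalently in Euclidean norm, which is what the result is used for in the body of the paper), let $m_\mu \coloneqq \E_{X\sim \mu}[X]$ and $m_\nu \coloneqq \E_{X\sim \nu}[X]$. The claim is $\|m_\mu - m_\nu\|_2 \le \wass(\mu,\nu)$.

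First I would dispose of the trivial case $m_\mu = m_\nu$. Otherwise, I would set the unit vector $v \coloneqq (m_\mu - m_\nu)/\|m_\mu - m_\nu\|_2$ and introduce the linear test function $f: \R^n \to \R$, $f(x) \coloneqq \langle v, x\rangle$. The key observation is that $f$ is $1$-Lipschitz: by Cauchy--Schwarz,
\begin{align*}
   |f(x) - f(y)| = |\langle v, x - y\rangle| \le \|v\|_2 \, \|x-y\|_2 = \|x-y\|_2,
\end{align*}
so $\|f\|_L \le 1$.

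Finally, I would plug $f$ into the dual form of $\wass$ given in Section~\ref{sec:prelimMath}. Since $\int_{\R^n} f(\omega)\mu(\de \omega) = \langle v, m_\mu\rangle$ and similarly for $\nu$, we obtain
\begin{align*}
   \wass(\mu,\nu) \ge \left| \int_{\R^n} f(\omega)\,(\mu-\nu)(\de \omega) \right| = |\langle v, m_\mu - m_\nu\rangle| = \|m_\mu - m_\nu\|_2,
\end{align*}
by the choice of $v$. This gives the result. There is no genuine obstacle: the whole proof reduces to recognizing that coordinate projections (more generally, linear functionals with unit gradient) are admissible $1$-Lipschitz test functions in the Kantorovich--Rubinstein duality, so the difference of means is directly a lower bound for $\wass(\mu,\nu)$.
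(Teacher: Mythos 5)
Your proof is correct and follows essentially the same route as the paper's: both apply the Kantorovich--Rubinstein dual form of $\wass$ to a linear $1$-Lipschitz test function. Your version is in fact slightly more careful, since the paper plugs the vector-valued identity map $x \mapsto x$ directly into the supremum (which is informal for $n>1$), whereas your projection onto the unit direction $v = (m_\mu - m_\nu)/\|m_\mu - m_\nu\|_2$ makes the norm interpretation of the vector inequality precise.
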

\begin{proof}
    It is sufficient to consider that
    $$\mathbb E[\mu]-\mathbb E[\nu] = \int_{\mathbb R^n} x \de \mu(x) - \int_{\mathbb R^n} x \de \nu(x) =
    \int_{\mathbb R^n} x (\de \mu(x)-\de \nu(x))\le \sup_{\|f\|_L\le 1}
    \int_{\mathbb R^n} f(x) (\de \mu(x)-\de \nu(x))$$
    where the last term is precisely the definition of Wasserstein metric.
\end{proof}

This result is useful to see that the bound that we are able to obtain with noise injection, that contains $\E_{s\sim d^{\pi_E}}[\wass(\pi_E(\cdot|s),\pi_I(\cdot|s))]$, is even better than its "noisy" counterpart $\E_{s\sim d^{\pi_{E}}}[\wass(\pi_{E,\ell}(\cdot|s),\pi_{I,\ell}(\cdot|s))]$ which measures the difference between the noisy policies. This is because, if the non noisy expert/imitator are deterministic, at fixed $s$, $\pi_E(\cdot|s),\pi_I(\cdot|s)$ are Dirac's delta, and so coincide with their mean, while $\pi_{E,\ell}(\cdot|s),\pi_{I,\ell}(\cdot|s)$ are distribution with mean $\pi_E(\cdot|s),\pi_I(\cdot|s)$ respectively. Therefore, we have:

$$\forall s\in \Ss: \qquad \wass(\pi_E(\cdot|s),\pi_I(\cdot|s))\le \wass(\pi_{E,\ell}(\cdot|s),\pi_{I,\ell}(\cdot|s)).$$

\section{Details of the simulation}
\label{app:experiments}

In the numerical simulation, we first trained some expert policy on the OpenAI \texttt{gym} environments described in the main paper, and then we measured their performance applying an i.i.d. Gaussian noise to the actions chosen by the policies. Precisely, we used an $\mathbb N(0,\sigma I)$ noise with
$$\sigma = 0.0,\ 0.033,\ \dots, 0.266,\ 0.3.$$
for a total of ten equispaced values in $[0,0.3]$.

In order to evaluate the performance of the policies, we made the mean over $40$ episodes for each value of $\sigma$, and then repeated for $20$ different random seeds. Once collected the data, to generate the plot \ref{noisy_effect} we reported the sample mean over the $20$ seeds and a $95\%$ confidence interval obtained with the bootstrapping method.

\subsection{Expert policy training}

In order to use expert policies that are as standard as possible, we used pretrained agents from the gihub repository \texttt{https://github.com/DLR-RM/rl-baselines3-zoo}.

There, it is possible to find all the details about the hyperparameters.

\begin{comment}
In order to train the expert policies, we relied on the built-in functions of the library \texttt{stable baselines3}. In particular, for \texttt{Pendulum-v0} and \texttt{LunarLanderContinuous-v2} we used the code that is suggested by the authors of the package in order to train DDPG with a Multi-layer-perceptron policy. In the first case:

\begin{verbatim}
    import gym
    import numpy as np
    from stable_baselines3 import DDPG
    from stable_baselines3.common.noise import NormalActionNoise
    
    env = gym.make('Pendulum-v0')
    
    n_actions = env.action_space.shape[-1]
    action_noise = NormalActionNoise(mean=np.zeros(n_actions), sigma=0.1*np.ones(n_actions))
    
    model = DDPG("MlpPolicy", env, action_noise=action_noise, verbose=1)
    model.learn(total_timesteps=100000, log_interval=10)
\end{verbatim}

While for \texttt{LunarLanderContinuous-v2} we replaced 'Pendulum-v0' with 'LunarLanderContinuous-v2'
and increased the timesteps to $500000$. Instead, for \texttt{BipedalWalker-v3} we could not obtain a good DDPG policy, even increasing the number of steps. Therefore, we relied on a pretrained policy obtained with PPO that can be found at this link
\begin{verbatim}https://github.com/pythonlessons/Reinforcement_Learning/tree/master/BipedalWalker-v3_PPO\end{verbatim}
\end{comment}

\subsection{Random number generation}

The two possible source of randomness are given by the random generation of the noise and by the intrinsic noise in the environments. When doing the simulations, we simply set both the random seed of \texttt{numpy} and the one of \texttt{gym} to $i=0,1,2,...19$ for each of the twenty experiments.

\subsection{Device and computational power required}

We ran the simulation on \texttt{Python 3.7.9} on a device with CPU $1.80$ GHz and $8.0$ GB of RAM.

\end{document}